\documentclass{article}

\PassOptionsToPackage{numbers, compress}{natbib}
\usepackage[preprint]{neurips_2026}

\usepackage[utf8]{inputenc} 
\usepackage[T1]{fontenc}    
\usepackage{hyperref}       
\usepackage{url}            
\usepackage{booktabs}       
\usepackage{amsfonts}       
\usepackage{nicefrac}       
\usepackage{microtype}      
\usepackage[most]{tcolorbox}
\usepackage{xcolor}         
\usepackage{amsmath}
\usepackage{amssymb}
\usepackage{amsthm}
\usepackage{graphicx}
\usepackage{wrapfig}
\usepackage{subcaption}
\newcommand{\E}{\mathbb{E}}

\DeclareMathOperator*{\argmin}{arg\,min}
\theoremstyle{plain}
\newtheorem{theorem}{Theorem}[section]
\newtheorem{proposition}[theorem]{Proposition}
\newtheorem{lemma}[theorem]{Lemma}

\newtheorem{claim}[theorem]{Claim}
\theoremstyle{definition}
\newtheorem{definition}[theorem]{Definition}
\newtheorem{assumption}[theorem]{Assumption}
\theoremstyle{remark}

\definecolor{lightgrey}{RGB}{245, 245, 245}
\definecolor{darkgrey}{RGB}{120, 120, 120}

\title{Performative Prediction with Selective Labels}

\author{%
  Giovani Valdrighi\thanks{Correspondence to: \texttt{giovani.valdrighi@ic.unicamp.br}} \\
  Instituto de Computação \\
  Universidade Estadual de Campinas \\
  Campinas, Brazil \\
  \And
  Isabel Valera \\
  Department of Computer Science \\
  Saarland University \\
  Saarbrücken, Germany \\
  \And
  Marcos Medeiros Raimundo \\
  Instituto de Computação \\
  Universidade Estadual de Campinas \\
  Campinas, Brazil \\
}

\begin{document}

\maketitle

\begin{abstract}
Many social applications of machine learning exhibit performative effects: population behavior changes in response to deployed models. Performative prediction studies this interaction through a distribution map that relates each model to the population distribution it induces. One of the main results in this framework showed that repeated risk minimization (RRM), which updates models by retraining on the most recent data, can converge to a stable model that minimizes risk on its own induced distribution. However, existing analyses typically assume access to the complete distributions of features and labels after model deployment, ignoring the possibility of \textit{selective labels}: observing labels only for the accepted subset of the population. In this work, we formalize performative prediction with selective labels and show that retraining only on observed data can misguide the retraining procedure and undermine the guarantees of convergence to a stable solution. We then propose a worst-case objective based on knowledge of a confidence interval on the probability of a positive label. Applying RRM to this objective permits us to remain within a bounded distance to the true stable point. Under a sensitivity assumption on the conditional label distribution, we further show how previously accepted data can tighten these confidence intervals over time. Experiments in a lending application with fairness regularization show that our robust optimization approach closely matches the performance of RRM with complete label access.

\end{abstract}

\section{Introduction}
\label{sec:introduction}

In algorithmic decision-making, machine learning models support decisions and intervene on the outcomes they were trained to predict. For example, a high-risk prediction for a certain disease can prompt early treatment and help avoid this outcome. Additionally, highly consequential applications such as hiring, finance, and criminal justice define opportunities and incentives, thereby shaping the populations they interact with. The seminal work by \citet{perdomo2020performative} formalized these ideas in the framework of \textit{performative prediction}, where each model $\theta$ induces a distribution $(X, Y) \sim D(\theta)$. In this sense, a model optimal under loss $\ell$ for a training distribution $D$ is not necessarily optimal when deployed under the distribution $D(\theta)$. This shift can be expressed by the decoupled risk, which separates the model $\phi$ that induces the distribution and the model $\theta$ being evaluated:
\begin{align}
    J(\phi, \theta) = \E_{D(\phi)}[\ell(X, Y, \theta)]
\end{align}
One of the main results from \citet{perdomo2020performative} was to show that if $\ell$ is strongly convex and smooth and $D(\theta)$ has bounded sensitivity w.r.t. $\theta$, the iterative procedure of retraining the model in the most recent distribution would converge to an equilibrium where $\theta$ was optimal in the induced distribution $D(\theta)$, that is, $\theta^{PS} = \argmin_\theta J(\theta^{PS}, \theta)$. This means $\theta^{PS}$ is robust to the distribution shift it induces. However, despite the practical relevance of performative prediction, many works still assume access to samples from the full distribution $D(\theta)$, ignoring that many decision-making applications have \textbf{selective labels}: labels are observed only for the accepted subset of the population. In a lending application, the decision-maker can access the economic profile and financial history from all applicants, while only observing the payment label for the subset that had their loan accepted; in a health diagnosis setting, patients' profiles are accessible, while the final diagnosis is only known for the subset that underwent testing.

Performative prediction is a widely discussed topic, with multiple works examining conditions for achieving performative stability, such as stochastic learning \citep{mendler2020stochastic, li2024stochastic, li2025clipped}, introducing dependence on previous states \citep{brown2022performative}, or relaxing assumptions on the loss function \citep{mofakhami2023performative}. In general, this literature considers variations of an iterative learning process called repeated risk minimization (RRM) in which, after selecting a model $\theta_t$, it is deployed in the world to obtain access to $D(\theta_t)$, which is then used to define the next model $\theta_{t+1}$. However, many relevant applications suffer from selective labels: while features $x$ are collected from all samples, the label $y$ is observed only for a subset of samples accepted under the current $\theta$. In that sense, we never observe the true distribution $D(\theta)$, but rather the conditional distribution $X, Y | O=1$, where $O=1$ is the acceptance event.

Our contributions are:
\begin{itemize}
    \item \textit{The Selective Performative Gap}: We demonstrate that retraining with performativity fails to reach stability under selective labels. When observation is deterministic and threshold-based, models trained on the fully observed distribution $D(\theta)$ versus the filtered distribution $\tilde{D}(\theta)$ diverge arbitrarily. This fundamental gap motivates our new framework.
    \item \textit{Robust Repeated Risk Minimization}: To overcome missing labels, we introduce a strongly convex upper-bound loss that operates independently of the true label $y$, extending partial identification techniques \citep{chen2025learning}. By leveraging limited knowledge of $P(Y=1|X=x, \theta)$, we bound the decoupled performative risk and optimize the model $\theta_t$ through our novel Robust RRM procedure.
    \item \textit{Theoretical Guarantees}: We prove that Robust RRM approximates the true performatively stable solution $\theta^{PS}$ within a bounded distance. This bound tightens with higher label accessibility, better distributional knowledge, and lower gradient sensitivity. Furthermore, we provide a principled method for iteratively tightening these bounds during retraining, based on prior observations.
    \item \textit{Novel Evaluation Framework}: We propose a new experimental setting for performative prediction with selective labels, adapting the lending environment with fairness constraints introduced by \citet{liu2018delayed} to validate our robust retraining dynamics.
\end{itemize}

Code for experiments is available at \href{https://github.com/recod-ai/perf_selective}{\nolinkurl{github.com/recod-ai/perf_selective}}.

\subsection{Related Work}

For a comprehensive review of prior work in performative prediction, we refer to the surveys by \citet{hardt2025performative} and \citet{kehrenberg2026dissecting}.

\paragraph{Performative Prediction}
Motivated by ideas of performativity in economics \citep{mackenzie2007economists}, \citet{perdomo2020performative} introduced performative prediction in supervised learning and studied conditions for stable, optimal solutions. Since then, a larger literature has arisen in performative prediction, with works studying how to reach performative optimality with structural assumptions over $D(\theta)$ \citep{izzo2021learn, izzo2022learn, miller2021outside} or bandit-inspired algorithms \citep{liu2024two, jagadeesan2022regret}, considering constraints \citep{yan2023zero}, fairness \citep{mishler2022fair, hu2022achieving, jin2026addressing}, or adapting it to reinforcement learning \citep{mandal2023performative}. Reaching a stable solution has also been a relevant topic, with works studying the convergence to the stable solution of stochastic algorithms \citep{mendler2020stochastic, li2022state, li2024stochastic, li2025clipped}, weakening the assumption of strong-convexity of the loss \citep{mofakhami2023performative}, introducing state dependency \citep{brown2022performative}, reusing data from previous deployments to improve convergence rates \citep{khorsandi2025tight}, or reaching a fair stable point \citep{jin2026addressing}. 

\paragraph{Selective Labels}
Selective labels is a special case of selection bias, a larger theme that studies settings where not all observations are available to the decision-maker. This problem has already been studied in prior work using static data distributions \citep{kilbertus2020fair, lakkaraju2017selective, chen2025learning} and its interaction with fairness \citep{rateike2022don, keswani2024fair, chang2024biased, frauen2024fair}. As \citet{fawkes2024the} point out, most fairness benchmarks exhibit some form of selection bias. Yet, few works have considered the interaction between dynamic distributions and selective labels. \citet{creager_causal_2020} and \citet{valdrighi2026longterm} both discuss fairness in dynamic environments with selective labels, with the former leveraging a doubly robust estimator and the latter presenting a reinforcement-learning algorithm. Yet neither solution can handle decisions made using thresholds.

A few works have already discussed particular cases of performative prediction with selective labels. In strategic classification \citep{hardt2016strategic}, where individuals can alter their features to obtain the desired prediction without truly altering their label, \citet{harris2023strategic} present online learning algorithms with bounded regret for classification using linear policies. \citet{zhao2025online} extend these results by accounting for noise in label observations. Despite their similarity to this work, both focus only on the strategic classification scenario. \citet{pombal2022prisoners} discuss how selective labels can introduce unfairness in performative prediction settings and presents a fraud-detection simulation.

\section{Preliminaries}
\label{sec:preliminaries}

We consider a population described by features $x \in \mathcal X \subseteq  \mathbb R^d$ and a binary label $y \in \mathcal Y :=\{0,1\} $. The decision-maker acts in the system with a model defined by parameters $\theta$ from a compact convex set $\Theta \subset \mathbb R^d$, which incurs loss defined by $\ell : \mathcal X \times \mathcal Y \times \Theta \to \mathbb R$. In a loan example, $x$ could be socioeconomic indicators, $y$ the loan payment, and $\ell$ is the binary cross-entropy. We will write the Euclidean norm as $\|\cdot\|_2$, $\mathbf{1}\{\cdot\}$ as the indicator function and use $\theta$ and $\phi$ to refer to arbitrary models from $\Theta$.

\paragraph{Performative Prediction} After deploying a model $\theta$, the population responds with a new distribution defined by the distribution map $D : \Theta \to \Delta(\mathcal X \times \mathcal Y)$. Thus, a training dataset $D$ will not reflect the true performance of a model $\theta$ when deployed. We will write $p_\theta(x)$ as the probability density of $x$ with the distribution $D(\theta)$ and define the label distribution $\alpha_{\theta}(x) = P(Y = 1 | X = x, \theta)$. Many applications of social decision-making will present performativity. In a loan application, individuals might try to manipulate their features $x$ to reach the desired approval, without necessarily altering the true label $y$. In this example, both $p_{\theta}(x)$ and $\alpha_{\theta}(x)$ will be dependent on $\theta$. Another application commonly discussed in long-term fairness studies \citep{liu2018delayed, rateike2024designing, zezulka2024fair} assumes that individuals will improve their condition (or have their condition altered by $\theta$), resulting in an update to $x$, which is then reflected in $y$. In this scenario, $p_{\theta}(x)$ remains $\theta$-dependent, while $\alpha_{\theta}$ becomes invariant to $\theta$. In the performative setting, the objective of the decision-maker is to minimize the performative risk:
\begin{align}
\min_{\theta \in \Theta} \E_{D(\theta)}[\ell(X, Y, \theta)]
\end{align}
However, this objective depends on $\theta$ both in the loss function and in the distribution, which we observe only after model deployment. We also define the decoupled performative risk $J(\phi, \theta) := \E_{D(\phi)}[\ell(X, Y, \theta)] $, where a model $\theta$ is evaluated in the distribution induced by another model $\phi$.

\paragraph{Stable Solutions}
In the performative prediction setting, two types of solutions are of interest. A performatively optimal solution is $\theta^{PO} = \argmin_{\theta \in \Theta} J(\theta, \theta)$, and a performatively stable solution is such that $\theta^{PS} =  \argmin_{\theta \in \Theta} J(\theta^{PS}, \theta)$. Performatively stable solutions can be interpreted as an equilibrium point, where $\theta^{PS}$ minimizes the loss over the distribution it induces, and the decision-maker has no incentive to select another model. Under the conditions we present below, retraining on the most recent data distribution converges to a performatively stable solution.

\begin{definition}[RRM]
\label{def:rrm}
Repeated Risk Minimization (RRM) is the procedure of starting from a model $\theta_0$, employing the iterative updates based on $\theta_{t+1} = G(\theta_t) := \argmin_{\theta \in \Theta} J(\theta_t, \theta)$.
\end{definition}

\begin{definition}[Sensitivity and regularity conditions \citep{perdomo2020performative}]
A distribution map $D : \Theta \to \Delta(\mathcal X \times \mathcal Y)$ is $\epsilon$-sensitive if $\forall \theta, \phi \in \Theta$, it holds that $\mathcal W_{1}(D(\theta), D(\phi)) \le \epsilon \|\theta - \phi\|_2$ where $\mathcal W_1$ is the Wasserstein-1 distance. Additionally, the loss $\ell : \mathcal X \times \mathcal Y \times \Theta \to \mathbb R$ is:

\begin{itemize}
    \item $\gamma$-strongly convex if $\forall \theta, \phi \in \Theta$ and $(x, y) \in \mathcal X \times \mathcal Y$, $\ell(x, y, \theta) \ge \ell(x, y, \phi) + \nabla_\theta \ell(x, y, \phi)^\top(\theta- \phi)+\frac{\gamma}{2}\|\theta- \phi\|_2^2$.
    \item $\beta$-smooth in $(x, y)$ if $\|\nabla_\theta \ell(x, y, \theta) - \nabla_\theta \ell(x', y', \theta)\|_2 \le \beta\|(x,y) -(x', y')\|_2, \forall \theta \in \Theta, (x, y), (x', y') \in \mathcal X \times \mathcal Y$.
\end{itemize}
\end{definition}

\begin{theorem}[Theo. 3.5 from \citep{perdomo2020performative}]
\label{theo:performative}
If the loss $\ell$ is $\beta$-smooth in $(x,y)$ and $\gamma$-strongly convex, the distribution map is $\epsilon$-sensitive, then RRM as in Def.~\ref{def:rrm} is a contraction satisfying:
\begin{align}
\|G(\theta) - G(\phi)\|_2 \le \dfrac{\epsilon \beta}{\gamma} \|\theta - \phi \|_2    
\end{align}
and if ${\epsilon \beta}/{\gamma} < 1$, the iterates $\theta_t := G(\theta_{t-1})$ will converge to a performatively stable solution $\theta^{PS}$.   
\end{theorem}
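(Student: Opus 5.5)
The plan follows the standard argument of \citet{perdomo2020performative}, which uses first-order optimality of $G(\theta)$ and $G(\phi)$, strong convexity, and Kantorovich--Rubinstein duality to control how the gradient shifts under a change of distribution.

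Fix $\theta,\phi \in \Theta$ and write $\theta' = G(\theta)$ and $\phi' = G(\phi)$. Both minimizers exist and are unique, since $\Theta$ is compact and convex and $J(\theta,\cdot)$ is $\gamma$-strongly convex; the latter holds because strong convexity of $\ell$ holds pointwise and is preserved by taking expectations. For a fixed distribution, the function $J(\theta,\cdot)$ is strongly convex, and its minimizer $\theta'$ satisfies the variational inequality $\nabla_\theta J(\theta,\theta')^\top(\phi'-\theta') \ge 0$. Applying the strong-convexity inequality twice and adding the two copies gives
\begin{align}
-\gamma\|\theta'-\phi'\|_2^2 \ge \big(\nabla J(\theta,\theta') - \nabla J(\theta,\phi')\big)^\top(\theta'-\phi') \cdot (-1),
\end{align}
that is, $(\theta'-\phi')^\top(\nabla J(\theta,\phi') - \nabla J(\theta,\theta')) \le -\gamma\|\theta'-\phi'\|_2^2$. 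Combining this with the optimality condition for $\theta'$ yields
\begin{align}
(\theta'-\phi')^\top \nabla J(\theta,\phi') \le -\gamma\|\theta'-\phi'\|_2^2 .
\end{align}
The optimality condition for $\phi'$ in $J(\phi,\cdot)$ gives $(\theta'-\phi')^\top \nabla J(\phi,\phi') \ge 0$. Subtracting the two inequalities then gives
\begin{align}
\gamma\|\theta'-\phi'\|_2^2 \le (\theta'-\phi')^\top\big(\nabla J(\phi,\phi') - \nabla J(\theta,\phi')\big).
\end{align}

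The key step bounds the right-hand side. Define $g(x,y) = (\theta'-\phi')^\top \nabla_\theta \ell(x,y,\phi')$. By Cauchy--Schwarz and $\beta$-smoothness in $(x,y)$, $g$ is Lipschitz with constant $\beta\|\theta'-\phi'\|_2$. Kantorovich--Rubinstein duality then gives
\begin{align}
\big|\E_{D(\phi)} g - \E_{D(\theta)} g\big| \le \beta\|\theta'-\phi'\|_2\,\mathcal W_1(D(\theta),D(\phi)) \le \epsilon\beta\|\theta'-\phi'\|_2\|\theta-\phi\|_2 .
\end{align}
This step also requires exchanging the gradient and the expectation, which I take as justified under the paper's regularity assumptions. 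Dividing by $\gamma\|\theta'-\phi'\|_2$ gives the contraction bound; if $\theta'=\phi'$ there is nothing to prove.

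For convergence, assume $\epsilon\beta/\gamma<1$. Then $G$ is a contraction of the complete metric space $\Theta$, which is closed because it is compact, into itself. By the Banach fixed point theorem, $G$ has a unique fixed point $\theta^{PS}$, and the iterates satisfy $\|\theta_t-\theta^{PS}\|_2 \le (\epsilon\beta/\gamma)^t\|\theta_0-\theta^{PS}\|_2 \to 0$. A fixed point of $G$ is by definition performatively stable.

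The main obstacle is the duality step. It needs the Lipschitz property of the composite function $g$ on $\mathcal X\times\mathcal Y$, with the label treated as a coordinate of the joint space. It also needs care with the sign bookkeeping in the variational inequalities, because $\Theta$ is constrained, so the minimizers need not be stationary points.
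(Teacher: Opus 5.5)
Your proof is correct and is essentially the original argument of Perdomo et al.\ for their Theorem~3.5, which this paper imports by citation without reproving. That argument has three steps. First, first-order optimality of both constrained minimizers together with strong convexity yields $\gamma\|G(\theta)-G(\phi)\|_2^2 \le (G(\theta)-G(\phi))^\top\big(\nabla J(\phi,G(\phi)) - \nabla J(\theta,G(\phi))\big)$. Second, Kantorovich--Rubinstein duality applied to the $\beta\|G(\theta)-G(\phi)\|_2$-Lipschitz map $(x,y)\mapsto (G(\theta)-G(\phi))^\top\nabla_\theta\ell(x,y,G(\phi))$ bounds the right side by $\epsilon\beta\|G(\theta)-G(\phi)\|_2\|\theta-\phi\|_2$. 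Third, Banach's fixed-point theorem on the closed set $\Theta$ gives convergence. The only difference is cosmetic: you first combine the two strong-convexity inequalities into strong monotonicity of the gradient and then invoke optimality of $G(\theta)$, whereas Perdomo et al.\ use that optimality condition inside one of the two inequalities.
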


However, as we argue in the following section, the decision-maker will commonly not have access to samples of the full distribution $D(\theta)$. 

\section{Selective Labels in Performative Prediction}

\begin{figure}[ht]
\centering
\begin{tcolorbox}[
    width=0.9\textwidth,
    colback=lightgrey,        
    colframe=darkgrey,        
    arc=3mm,                  
    boxrule=1pt,            
    boxsep=1em,               
    left=0pt, right=0pt, top=0pt, bottom=0pt 
]
\noindent \textbf{for} $t = 1, 2, \dots$ \\[0.3em]
\hspace*{1.5em} 1. Decision-maker publishes a model $\theta_t$ \\[0.3em]
\hspace*{1.5em} 2. Environment (the population) realizes feature-label pair $(X, Y) \sim D(\theta_t)$ \\[0.3em]
\hspace*{1.5em} 3. Decision-maker observes $X$ \\[0.3em]
\hspace*{1.5em} 4. Decision-maker computes score $f_{\theta_t}(X)$ and defines $O := \mathbf{1}\{f_{\theta_t}(X) \geq \tau\}$ \\[0.3em]
\hspace*{1.5em} 5. If $O = 1$, decision-maker observes $Y$
\end{tcolorbox}
\caption{Observation dynamics for performative prediction with selective labels.}
\label{fig:performative_selective}
\end{figure}

In real-world applications such as loan requests, school and job admissions, healthcare, and criminal justice, decision-makers observe $x$ but have access to labels $y$ only for a subset of the population that was accepted, called \textit{selective labels} \citep{lakkaraju2017selective}. For example, in judicial bail decisions, judges have access to criminal history $x$, while the label $y$ representing recidivism is observed only for defendants released pretrial \citep{lakkaraju2017selective}. In health diagnosis, patient covariates, such as routinely collected clinical measurements, may be available before a definitive diagnostic test is ordered. A label defined by the final diagnosis is therefore available only for the subset of patients who underwent testing \citep{chang2024biased}. In both settings, deployed policies may also alter subsequent actions and outcomes, thereby inducing performative effects \citep{boeken2024evaluating}.

We extend the performative prediction framework to include selective labels as displayed in Fig.~\ref{fig:performative_selective}. In this extension, $\theta$ also determines label observations. More specifically, after deploying $\theta$, the population/environment observes it and reacts with the pair $X, Y$ sampled from $D(\theta)$. The decision-maker first observes $X$ and, using the model $\theta$, computes a score via $f_\theta: \mathcal X \to \mathbb R$. In this work, we focus on the deterministic, threshold-based observation case: $O = \mathbf{1}\{ f_\theta(X) \ge \tau\}$, where $\tau \in \mathbb R$ is a threshold. Finally, $Y$ is revealed only upon acceptance, that is, when $O=1$. Prior online learning work by \citet{kilbertus2020fair} shows that such a configuration is challenging and can yield suboptimal performance even when the true distribution $D$ is fixed.

To employ RRM, after deploying $\phi$, the decision-maker will leverage the data to evaluate the loss of another model $\theta$, that is, calculating $\mathbb E_{D(\phi)}[ \ell(X, Y, \theta)]$. Yet, $\ell$ cannot be calculated for rejected samples. A naive solution for the selective labels problem would be to leverage only the data from the accepted population, that is, the distribution $\tilde D(\phi)$ that results from first sampling $(X, Y) \sim D(\phi)$ and then keeping observations if $f_{\phi}(X) \ge \tau$. However, as our next example shows, this procedure can produce models arbitrarily far from the true performatively stable solution $\theta^{PS}$.

\begin{proposition}
Define $\tilde G(\phi):= \argmin_{\theta \in \Theta} \E_{\tilde D(\phi)}[\ell(X, Y, \theta)]$ and let $G$ be the RRM as Def.~\ref{def:rrm}. The gap of one iteration with a model $\theta$, measured by the distance $\|G(\theta) - \tilde G(\theta)\|_2$ can reach the maximum value of $\mathrm{diam}(\Theta) :=\sup_{\theta', \theta'' \in \Theta} \|\theta' - \theta''\|_2$.
\begin{proof}
We will prove this with a simple example where $\|G(\theta) - \tilde G(\theta)\|_2 = \mathrm{diam}(\Theta)$ even when $D$ is not dependent on $\theta$. Consider a parameter space restricted to $\Theta = [0, B]$ for a fixed $B \in \mathbb R_{>0}$. Let $x \in \{-1, 1\}$ and the loss is the binary cross entropy $\ell(x, y, \theta) = -y \log(\sigma(\theta x)) - (1 - y) \log(1 - \sigma(\theta x))$. The decision-maker uses a linear model $f_\theta(x) = \theta x$ and a threshold $\tau = 0$. Define a fixed distribution such that $P(X = -1 \land Y = 1) = 0.9$ and $P(X = 1 \land Y = 1) = 0.1$. The loss is strongly convex on $\Theta$, and for any fixed $\phi$, $J(\phi, \theta) = 0.9 \log(1 + e^{\theta}) + 0.1 \log(1 + e^{-\theta})$ has derivative $\sigma(\theta) - 0.1 > 0$ on $\Theta$, so it is strictly increasing in $\theta$ and its minimizer is $0$. Consequently, $G(\phi) = 0$ for every $\phi$. However, after deploying any model $\phi \in (0, B]$, only samples with $x = 1$ satisfy $f_\phi(x) = \phi x \ge 0$, so we observe only samples with $x=1$ and $y=1$. On this selectively observed distribution, the loss is strictly decreasing and is minimized when $\theta = B$. Therefore, $|G(\phi) - \tilde G(\phi) | = B = \mathrm{diam}(\Theta)$.  
\end{proof}
\end{proposition}

This proposition shows that retraining on $\tilde D(\cdot)$ can end arbitrarily far from the performatively stable solution $\theta^{PS}$ obtained from RRM, even when no performative effects exist. In this way, training on the selectively labeled data can be an inappropriate objective depending on the true distribution map $D(\cdot)$. Next, we study how to leverage the features $x$ of rejected samples to ensure proximity to the true decoupled risk.

\section{A Robust Optimization Approach}

As our previous result showed, using only the accepted population to perform RRM can cause a departure from the stable point, so we present a robust optimization approach. Inspired by prior work on partial identification from \citet{chen2025learning}, the proposed approach leverages a worst-case estimate of the decoupled risk. This lets us show that running RRM on this worst-case loss approximates the stable solution $\theta^{PS}$ within a bounded distance. We present all proofs in Appendix~\ref{app:proofs}. We first assume partial knowledge of the label distribution $\alpha_{\theta}$.

\begin{assumption}
\label{as:limits}
There are functions $l : \mathcal X \times \Theta \to [0, 1], u : \mathcal X \times \Theta \to [0, 1]$ such that for all $\theta \in \Theta, x \in \mathcal X$, $l(x, \theta) \le \alpha_\theta(x) \le u(x, \theta)$.
\end{assumption}

This assumption is always valid in the trivial case where $l(x, \theta) = 0, u(x, \theta) = 1, \forall x, \theta$. Yet, as the following results show, improving these intervals is an important step toward tightening our robust approach. Later, we present a specific formulation that leverages additional knowledge about the label distribution. With intervals $[l(x, \theta), u(x, \theta)]$, we can define a worst-case loss as follows:

\begin{definition}[Worst-Case Loss]
\label{def:wc_loss}
With a loss $\ell : \mathcal X \times \mathcal Y \times \Theta \to \mathbb R$, define $\Delta_\ell(x, \theta) := \ell(x, 1, \theta) - \ell(x, 0, \theta)$ as the difference between the loss evaluated on $y=1$ and $y=0$ for sample $x$ and model $\theta$. The worst-case loss\footnote{We discuss extending the worst-case loss for the multi-class setting in Appendix~\ref{app:multi_class}.} for sample $x$ under the distribution $D(\phi)$ is:
\begin{align}
     \overline \ell_\phi(x, \theta) &:=  \ell(x, 0, \theta) + \log \left (e^{l(x, \phi ) \Delta_\ell(x, \theta )}+e^{u(x, \phi )\Delta_\ell(x, \theta)} \right )
 \end{align}
 \end{definition}
The worst-case loss upper-bounds the conditional expected loss and remains differentiable and strongly convex.
\begin{claim}
\label{claim:bound}
If AS.~\ref{as:limits} holds, for any distribution $D(\phi)$ over $X, Y$, it holds that $\mathbb E_{D(\phi)}[\ell(X, Y, \theta)]  \le \mathbb E_{D(\phi)}[\overline \ell_\phi(X, \theta)]$. Additionally, if $\ell$ is $\gamma$-strongly convex and $\beta$-smooth, then $\overline \ell_\phi$ is $\gamma$-strongly convex but not necessarily $\beta$-smooth.
\end{claim}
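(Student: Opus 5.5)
The plan is to work pointwise in $x$ by conditioning on $X=x$, and to reduce both parts of Claim~\ref{claim:bound} to elementary properties of the log-sum-exp function. Write $\Delta = \Delta_\ell(x,\theta)$, $a = l(x,\phi)$, $b = u(x,\phi)$. The conditional expected loss is
\begin{align}
\E[\ell(X,Y,\theta)\mid X=x] = \ell(x,0,\theta) + \alpha_\phi(x)\,\Delta .
\end{align}
By Assumption~\ref{as:limits}, $\alpha_\phi(x)\in[a,b]$, so this is at most $\ell(x,0,\theta) + \max(a\Delta, b\Delta)$. Since $\log(e^{s}+e^{t}) \ge \max(s,t)$, the conditional loss is bounded by $\overline\ell_\phi(x,\theta)$. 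Taking the expectation over $p_\phi(x)$ then gives the first inequality. Measurability and integrability are routine, since $\overline\ell_\phi \le \ell(x,0,\theta) + |\Delta| + \log 2$.

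For strong convexity, rewrite the worst-case loss as
\begin{align}
\overline\ell_\phi(x,\theta) = \log\!\left( e^{(1-a)\ell(x,0,\theta) + a\,\ell(x,1,\theta)} + e^{(1-b)\ell(x,0,\theta) + b\,\ell(x,1,\theta)} \right).
\end{align}
Each exponent $g_c(\theta) := (1-c)\ell(x,0,\theta) + c\,\ell(x,1,\theta)$ with $c\in[0,1]$ is a convex combination of two $\gamma$-strongly convex functions, so $g_c - \frac{\gamma}{2}\|\theta\|_2^2$ is convex. Subtracting the quadratic from both exponents gives
\begin{align}
\overline\ell_\phi - \tfrac{\gamma}{2}\|\theta\|_2^2 = \log\!\left(e^{h_a}+e^{h_b}\right), \qquad h_c := g_c - \tfrac{\gamma}{2}\|\theta\|_2^2,
\end{align}
where both $h_a$ and $h_b$ are convex. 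Log-sum-exp is convex and coordinatewise nondecreasing, so its composition with convex functions is convex. Hence $\overline\ell_\phi$ is $\gamma$-strongly convex in $\theta$. Differentiability follows from the chain rule.

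The main obstacle is the negative part: showing $\beta$-smoothness in $(x,y)$ can fail. Here I would look for a counterexample. Note first that $\overline\ell_\phi$ no longer depends on $y$, but it depends on $x$ through $l(x,\phi)$ and $u(x,\phi)$, which are arbitrary functions into $[0,1]$ and need not even be continuous. The gradient is
\begin{align}
\nabla_\theta\overline\ell_\phi = w\,\nabla g_a + (1-w)\,\nabla g_b, \qquad w = \mathrm{softmax}\text{ weight of } (g_a, g_b).
\end{align}
So a discontinuous (or steep) choice of $u(\cdot,\phi)$ in $x$ makes $\nabla_\theta\overline\ell_\phi$ jump in $x$, violating any Lipschitz bound. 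A concrete instance uses the logistic loss with scalar $x$, $l\equiv 0$, and $u(x,\phi)=\mathbf 1\{x\ge 0\}$. Then $\nabla g_b - \nabla g_a = \nabla\ell(x,1,\theta)-\nabla\ell(x,0,\theta) = -x$, which is nonzero for $x\neq 0$. The gradient therefore has a jump of size proportional to $|x|\cdot w(1-w)$-type terms at $x=0^\pm$; taking $u$ to switch at a nonzero point $x_0$ makes the jump strictly positive. Even with smooth $l,u$, the Lipschitz constant picks up factors of $\|\nabla_x u\|\,\|\nabla\Delta\|$, which are unbounded in terms of $\beta$. Either instance suffices for the ``not necessarily'' claim.
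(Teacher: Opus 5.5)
Your proposal is correct and follows the paper's proof step for step. It uses linearity of the conditional loss in $\alpha_\phi(x)$ together with $\max\{s,t\}\le\log(e^{s}+e^{t})$, the log-sum-exp argument on the convex functions $h_c=g_c-\frac{\gamma}{2}\|\theta\|_2^2$, and a discontinuous endpoint function in $x$ for the failure of smoothness.

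The only difference is the concrete counterexample. The paper takes $\ell(x,0,\theta)=\frac{\gamma}{2}\theta^2$, $\ell(x,1,\theta)=\frac{\gamma}{2}\theta^2+\theta$, $u\equiv 1$, $l=\mathbf 1\{x\ge 0\}$, so $\nabla_\theta\Delta_\ell\equiv 1$. This avoids two details your logistic instance has to handle:
\begin{itemize}
\item Because $\nabla_\theta\Delta_\ell=-x$, the jump vanishes at $x=0$. That is why you must switch at some $x_0\neq 0$, where the jump is $\sigma(-\theta x_0)|x_0|$.
\item Plain logistic loss does not satisfy the $\gamma$-strong convexity hypothesis on its own. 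You need, e.g., a ridge term $\frac{\gamma}{2}\theta^2$, which leaves $\Delta_\ell$ unchanged.
\end{itemize}
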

Finally, we can leverage our worst-case loss $\overline \ell_\phi$ for samples for which we do not have access to true labels, while using the loss $\ell$ for accepted samples. By using the true loss measure on accepted samples, we avoid losing information from the observed labels.
\begin{definition}[Worst-case Decoupled Risk]
The worst-case decoupled risk for a loss $\ell$, with the defined respective worst-case loss as in Def.~\ref{def:wc_loss}, employs $\ell$ for samples with label-access and $\overline \ell_\phi$ for samples without access, as:
\begin{align}
    \overline J(\phi, \theta) := \mathbb E_{D(\phi)}[O \ell(X, Y, \theta) + (1 - O)\overline \ell_\phi(X,  \theta)]
\end{align}
\end{definition}

\begin{definition}[Robust RRM]
\label{def:robust_rrm}
Robust Repeated Risk Minimization is the procedure of starting from a model $\theta_0$, employing the iterative updates based on $\theta_{t+1} = \overline G(\theta_t) := \argmin_{\theta \in \Theta} \overline{J}(\theta_t, \theta)$.
\end{definition}

Since $\overline \ell_\phi$ is not necessarily $\beta$-smooth, we cannot directly apply Theo.~\ref{theo:performative} by replacing $\ell$ with $\overline \ell_\phi$. In that case, convergence to a stable solution is not guaranteed, even to a solution different from $\theta^{PS}$. In the remainder of this section, we show that Robust RRM remains within a bounded distance of the stable point $\theta^{PS}$ obtained with access to all labels. Our next result studies this by comparing one-step updates under both risks.

\begin{theorem}[One-step gap between full and robust updates]
\label{theo:model_gap}
Fix a deployment model $\phi$. Suppose $\ell$ is differentiable and $\gamma$-strongly convex on $\theta$ and AS.~\ref{as:limits} holds. Let $G$ be the RRM as in Def.~\ref{def:rrm} and $\overline G$ be the Robust RRM as in Def.~\ref{def:robust_rrm}.
Then
\begin{align}
\|G(\phi)-\overline G(\phi)\|_2
\le
\frac{1}{\gamma}
\sup_{\theta\in\Theta}
\mathbb E_{D(\phi)}
\left[
(1-O)
|u(X,\phi)-l(X,\phi)|
\|\nabla_\theta\Delta_\ell(X,\theta)\|_2
\right].
\end{align}
In particular, if $\ell$ is the binary cross-entropy of a linear score $f_\theta(x)=\theta^\top x$ and $\forall x \in \mathcal X, \|x\|_2 \le R$:
\begin{align}
\|G(\phi)-\overline G(\phi)\|_2
\le
\frac{R}{\gamma}
\mathbb E_{D(\phi)}
\left[
(1-O)
|u(X,\phi)-l(X,\phi)|
\right].
\end{align}
\end{theorem}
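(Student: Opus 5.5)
The plan is the standard perturbation argument for minimizers of strongly convex objectives. Fix $\phi$ and write $\theta^\star := G(\phi)=\argmin_{\theta\in\Theta} J(\phi,\theta)$ and $\overline\theta := \overline G(\phi)=\argmin_{\theta\in\Theta}\overline J(\phi,\theta)$. Both objectives are $\gamma$-strongly convex in $\theta$. For $J(\phi,\cdot)$ this follows from strong convexity of $\ell$. For $\overline J(\phi,\cdot)$ it follows because on $\{O=1\}$ we use $\ell$, and on $\{O=0\}$ we use $\overline\ell_\phi$, which is $\gamma$-strongly convex by Claim~\ref{claim:bound}. Since $\Theta$ is compact and convex, I would use the first-order variational inequalities rather than stationarity:
\begin{align*}
\nabla_\theta J(\phi,\theta^\star)^\top(\theta-\theta^\star)\ge 0,\qquad \nabla_\theta \overline J(\phi,\overline\theta)^\top(\theta-\overline\theta)\ge 0 \quad \forall\theta\in\Theta.
\end{align*}
Strong convexity gives strong monotonicity of the gradient:
\begin{align*}
\big(\nabla_\theta\overline J(\phi,\theta^\star)-\nabla_\theta\overline J(\phi,\overline\theta)\big)^\top(\theta^\star-\overline\theta)\ge\gamma\|\theta^\star-\overline\theta\|_2^2 .
\end{align*}
Next, apply the variational inequality for $\overline\theta$ with $\theta=\theta^\star$, and the one for $\theta^\star$ with $\theta=\overline\theta$. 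Combining these yields $\big(\nabla_\theta\overline J(\phi,\theta^\star)-\nabla_\theta J(\phi,\theta^\star)\big)^\top(\theta^\star-\overline\theta)\ge\gamma\|\theta^\star-\overline\theta\|_2^2$. Cauchy--Schwarz then gives
\begin{align*}
\|\theta^\star-\overline\theta\|_2\le\frac1\gamma\big\|\nabla_\theta\overline J(\phi,\theta^\star)-\nabla_\theta J(\phi,\theta^\star)\big\|_2\le\frac1\gamma\sup_{\theta\in\Theta}\big\|\nabla_\theta\overline J(\phi,\theta)-\nabla_\theta J(\phi,\theta)\big\|_2 .
\end{align*}

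The remaining work is to compute this gradient gap. Because $O=\mathbf 1\{f_\phi(X)\ge\tau\}$ is a function of $X$ alone, the accepted parts of $J$ and $\overline J$ cancel. What remains is
\begin{align*}
\overline J(\phi,\theta)-J(\phi,\theta)=\E_{D(\phi)}\big[(1-O)\big(\overline\ell_\phi(X,\theta)-\E[\ell(X,Y,\theta)\mid X]\big)\big].
\end{align*}
Conditioning on $X=x$ gives $\E[\ell\mid X=x]=\ell(x,0,\theta)+\alpha_\phi(x)\Delta_\ell(x,\theta)$, whose gradient is $\nabla_\theta\ell(x,0,\theta)+\alpha_\phi(x)\nabla_\theta\Delta_\ell(x,\theta)$. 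Differentiating Def.~\ref{def:wc_loss} gives $\nabla_\theta\overline\ell_\phi=\nabla_\theta\ell(x,0,\theta)+w(x,\theta)\nabla_\theta\Delta_\ell(x,\theta)$, where
\begin{align*}
w=\frac{l\,e^{l\Delta_\ell}+u\,e^{u\Delta_\ell}}{e^{l\Delta_\ell}+e^{u\Delta_\ell}}
\end{align*}
is a convex combination of $l(x,\phi)$ and $u(x,\phi)$. Assumption~\ref{as:limits} puts $\alpha_\phi(x)$ in $[l,u]$ as well, so $|w-\alpha_\phi(x)|\le|u(x,\phi)-l(x,\phi)|$. Moving the norm inside the expectation via Jensen then yields the first displayed bound.

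For the binary cross-entropy with linear score $z=\theta^\top x$, we have $\ell(x,1,\theta)=\log(1+e^{-z})$ and $\ell(x,0,\theta)=\log(1+e^{z})$. Hence $\Delta_\ell=-z$, so $\nabla_\theta\Delta_\ell=-x$, and its norm is at most $R$ independently of $\theta$. The supremum therefore disappears and the second bound follows.

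The main obstacle I expect is technical rather than conceptual. It consists of justifying differentiation under the expectation, i.e.\ that $\nabla_\theta J$ and $\nabla_\theta\overline J$ equal the expected pointwise gradients. I would handle this by dominated convergence, assuming integrable gradient bounds on the compact $\Theta$; these hold automatically in the BCE case since $\|\nabla_\theta\ell\|_2\le R$. A second point needing care is that the argmin is constrained. This is exactly why I use the variational-inequality form instead of setting gradients to zero.
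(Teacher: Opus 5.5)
Your proposal is correct and follows the same route as the paper. Both use the $\frac{1}{\gamma}\sup_{\theta}\|\nabla_\theta J(\phi,\theta)-\nabla_\theta\overline J(\phi,\theta)\|_2$ perturbation bound for minimizers of $\gamma$-strongly convex objectives. Both condition on $X$ to reduce the gradient gap to $(1-O)(\alpha_\phi(X)-w_\phi(X,\theta))\nabla_\theta\Delta_\ell(X,\theta)$, with $w_\phi$ a convex combination of $l$ and $u$. Both use $\Delta_\ell=-\theta^\top x$ for the binary cross-entropy.

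The one difference is that you derive the perturbation bound from the first-order variational inequalities on the constrained set $\Theta$, whereas the paper only asserts it. Your derivation also gives a slightly sharper bound, with the gradient gap evaluated at $\theta=G(\phi)$ instead of the supremum.
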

This theorem shows that one iteration performed both in $ G$ and $ \overline G$ will result in two models with a limited distance, which is related to the gap between gradients of the loss function on $y=0$ and $y=1$ and the width of the limits $l$ and $u$ over the induced distribution of $D(\phi)$. Since robust updates remain within a bounded distance of the true updates, and true updates converge to a stable solution, we can bound how close iteration with $\overline G$ stays to that stable solution.

\begin{theorem}[Approximate stability of Robust RRM]
\label{theo:convergence}
Assume the conditions of Theo.~\ref{theo:performative}, such that the full update $G$ is an $\eta$-contraction with $\eta = \epsilon \beta / \gamma < 1$, and let $\theta^{PS}$ denote its unique fixed point. Suppose AS.~\ref{as:limits} holds and let $\kappa := \sup_{\theta \in \Theta} \|\overline G(\theta)-G(\theta)\|_2$. 
For the robust iterates $\theta_{t+1}=\overline G(\theta_t)$:
\begin{align}
\|\theta_t-\theta^{PS}\|_2
\le
\eta^t\|\theta_0-\theta^{PS}\|_2
+
\frac{1-\eta^t}{1-\eta}\kappa
\end{align}
Consequently, $\limsup_{t\to\infty}\|\theta_t-\theta^{PS}\|_2 \le \frac{\kappa}{1-\eta}$.
\end{theorem}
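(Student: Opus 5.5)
The plan is the standard perturbed-contraction argument. The robust iterate differs from the full update by at most $\kappa$, and the full update contracts toward $\theta^{PS}$ at rate $\eta$. Fix $t \ge 0$. Since $\theta^{PS}=G(\theta^{PS})$, I first insert $G(\theta_t)$ and apply the triangle inequality:
\begin{align}
\|\theta_{t+1}-\theta^{PS}\|_2 = \|\overline G(\theta_t)-G(\theta^{PS})\|_2 \le \|\overline G(\theta_t)-G(\theta_t)\|_2 + \|G(\theta_t)-G(\theta^{PS})\|_2 .
\end{align}
The first term is at most $\kappa$ by the definition of $\kappa$ as a supremum over $\Theta$. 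Here we use $\theta_t\in\Theta$, which holds because $\overline G$ maps into $\Theta$ by definition of the argmin. The second term is at most $\eta\|\theta_t-\theta^{PS}\|_2$ by Theo.~\ref{theo:performative}. This yields the one-step recursion $a_{t+1}\le \eta a_t+\kappa$ for $a_t:=\|\theta_t-\theta^{PS}\|_2$.

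Next I unroll the recursion by induction on $t$. The base case $t=0$ is trivial, since $(1-\eta^0)/(1-\eta)=0$. For the inductive step,
\begin{align}
a_{t+1}\le \eta\Big(\eta^t a_0+\tfrac{1-\eta^t}{1-\eta}\kappa\Big)+\kappa = \eta^{t+1}a_0 + \tfrac{\eta-\eta^{t+1}+1-\eta}{1-\eta}\kappa = \eta^{t+1}a_0+\tfrac{1-\eta^{t+1}}{1-\eta}\kappa .
\end{align}
This is exactly the claimed bound. For the limsup statement, $0\le\eta<1$ gives $\eta^t\to0$, and $a_0$ is finite because $\Theta$ is compact. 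The bound therefore tends to $\kappa/(1-\eta)$.

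I do not expect a real obstacle. The only points to check are well-definedness and finiteness. First, $\overline G(\theta)$ must be a well-defined single point. This follows from the $\gamma$-strong convexity of $\overline J(\phi,\cdot)$: both $\ell$ and $\overline\ell_\phi$ are $\gamma$-strongly convex by Claim~\ref{claim:bound}, and the expectation of a mixture of them is too, minimized over the compact convex set $\Theta$. Second, $\kappa$ must be finite. It is, because $\kappa\le\mathrm{diam}(\Theta)$; Theo.~\ref{theo:model_gap} additionally gives an explicit bound on it. Notably, no contraction or smoothness property of $\overline G$ itself is needed, which is precisely why the argument survives the failure of $\beta$-smoothness of $\overline\ell_\phi$.
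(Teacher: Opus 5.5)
Your proof is correct and follows essentially the same route as the paper. Both insert $G(\theta_t)$, use $G(\theta^{PS})=\theta^{PS}$ to get $a_{t+1}\le \eta a_t+\kappa$, and unroll the recursion (you by explicit induction, the paper by summing the geometric series), and your checks on the well-definedness of $\overline G$ and the finiteness of $\kappa$ are valid details the paper leaves implicit.
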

While this result does not prove the convergence of Robust RRM, it shows that it remains within a bounded distance of the stable solution with total access to labels. From Theo.~\ref{theo:model_gap}, this distance $\kappa$ relates to the strong-convexity constant $\gamma$ of the loss $\ell$, the  width of $|u(\cdot, \cdot)-l(\cdot, \cdot)|$ and the magnitude of the gradient $\|\nabla_\theta\Delta_\ell(X,\theta)\|_2$. In Appendix~\ref{app:bound_performative}, we also present an upper bound on the performative risk of Robust RRM with some additional conditions on $\ell$. As $\ell$ is application-dependent and cannot be easily altered, we study how to reduce the width of intervals during learning.

\subsection{Improving Label Distribution Limits}
\label{sec:improving}

In this section, we present an approach to estimate limits $l$ and $u$ for data with unseen labels based on the knowledge of $\alpha_{\theta}$. We do this by introducing sensitivity constants that are specific to the label distribution as follows:

\begin{definition}
The label distribution $\alpha_\theta(x)$ is $(L_x, L_\theta)$-sensitive if for all $x, x' \in \mathcal X$ and $\theta, \phi \in \Theta$, $|\alpha_\theta(x) - \alpha_{\phi}(x')| \le L_x \|x-x'\|_2 + L_\theta \| \theta - \phi\|_2$.
\end{definition}

This definition controls how $\alpha_\theta(x)$ can change by varying $x$ and $\theta$. In this way, we can construct a limit for unseen $\alpha_\theta(x)$ based on the knowledge of observed samples $x$ and models $\theta$. In particular, if a label distribution $\alpha_\theta$ is invariant to $\theta$, we can conclude that it is $(L_x, 0)$-sensitive. In the remainder of this section, we will focus on the case where $\alpha_\theta$ is invariant to $\theta$, and we will simplify the notation to $\alpha(x), l(x), u(x)$. See Appendix~\ref{app:results_dependent} for the results when $L_\theta \neq 0$.

During the repeated risk minimization procedure, the decision-maker has access to data from previously accepted individuals, that is $X, Y | O = 1$, for all previously deployed models $\theta_1, \dots, \theta_t$. At time $ t$, this information can be used to assess the limit $\alpha$ for samples rejected by $\theta_t$. Let $\mathcal X^{\mathrm{obs}}_{1:t}$ be the support of the features accepted by $\theta_1, \dots, \theta_t$. Then, for an unobserved sample $x_{\mathrm{unobs}}$, the following limits are valid:
\begin{align}
     \label{eq:lower}
    l(x_{\mathrm{unobs}}) &:= \max \Big\{0, \sup_{x_{\mathrm{obs}} \in \mathcal X^{\mathrm{obs}}_{1:t}} \alpha(x_{\mathrm{obs}}) -  L_x\|x_{\mathrm{obs}}-x_{\mathrm{unobs}}\|_2 \Big\}  \le \alpha(x_{\mathrm{unobs}})  \\
    u(x_{\mathrm{unobs}}) &:= \min \Big\{1, \inf_{x_{\mathrm{obs}} \in \mathcal X^{\mathrm{obs}}_{1:t}} \alpha(x_{\mathrm{obs}}) + L_x\|x_{\mathrm{obs}}-x_{\mathrm{unobs}}\|_2 \Big\} \ge \alpha(x_{\mathrm{unobs}}) \label{eq:upper}
\end{align}

These two limits, while depending on quantities $\alpha$, could be estimated from the observed data, as for an accepted sample such that $O = 1$, we have that $P(Y=1|X=x_{\mathrm{obs}})= P(Y=1|X=x_{\mathrm{obs}}, O = 1)$. This definition of the bounds $l$ and $u$ gives a more specific expression for Theo.~\ref{theo:model_gap} that relates the sensitivity of $\alpha$ and the coverage of observation induced by previous models $\theta_1, \dots, \theta_t$.

\begin{theorem}
\label{theo:coverage_gap_invariant}
Assume the conditions of Theo.~\ref{theo:model_gap}, that $\alpha(x)$ is $(L_x, 0)$-sensitive, and define $l$ and $u$ as in Eqs.~\ref{eq:lower}--\ref{eq:upper}. Then, for any deployed
model $\phi$,
\begin{align}
\|G(\phi)-\overline G(\phi)\|_2
\le
\frac{2L_x}{\gamma}
\sup_{\theta\in\Theta}
\mathbb E_{D(\phi)}
\left[
(1-O)
\inf_{x_{\mathrm{obs}}\in\mathcal X^{\mathrm{obs}}_{1:t}}
\|X-x_{\mathrm{obs}}\|_2
\,
\|\nabla_\theta\Delta_\ell(X,\theta)\|_2
\right].
\end{align}
In particular, for binary cross-entropy with linear score
$f_\theta(x)=\theta^\top x$ and $\|x\|_2\le R$,
\begin{align}
\|G(\phi)-\overline G(\phi)\|_2
\le
\frac{2L_xR}{\gamma}
\mathbb E_{D(\phi)}
\left[
(1-O)
\inf_{x_{\mathrm{obs}}\in\mathcal X^{\mathrm{obs}}_{1:t}}
\|X-x_{\mathrm{obs}}\|_2
\right].
\end{align}
\end{theorem}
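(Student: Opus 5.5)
The plan is to reduce Theorem~\ref{theo:coverage_gap_invariant} to Theorem~\ref{theo:model_gap}. The only new ingredient is a pointwise bound on the interval width $|u(x)-l(x)|$ for the limits of Eqs.~\ref{eq:lower}--\ref{eq:upper}. We first check that these $l,u$ satisfy AS.~\ref{as:limits}. For any $x_{\mathrm{obs}}\in\mathcal X^{\mathrm{obs}}_{1:t}$, $(L_x,0)$-sensitivity gives
\[
\alpha(x_{\mathrm{obs}})-L_x\|x_{\mathrm{obs}}-x\|_2\le\alpha(x)\le\alpha(x_{\mathrm{obs}})+L_x\|x_{\mathrm{obs}}-x\|_2 .
\]
Taking the supremum on the left, the infimum on the right, and clipping to $[0,1]$ (valid because $\alpha\in[0,1]$) yields $l(x)\le\alpha(x)\le u(x)$ with values in $[0,1]$. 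Hence Theorem~\ref{theo:model_gap} applies verbatim.

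Next we bound the width. Fix $x$ and any $x_{\mathrm{obs}}$. Clipping only narrows the interval, so
\[
u(x)\le\alpha(x_{\mathrm{obs}})+L_x\|x_{\mathrm{obs}}-x\|_2,\qquad l(x)\ge\alpha(x_{\mathrm{obs}})-L_x\|x_{\mathrm{obs}}-x\|_2 .
\]
More precisely, $\min\{1,a\}-\max\{0,b\}\le a-b$ whenever the result is nonnegative, and $u\ge l$ holds since both bracket $\alpha(x)$. Subtracting gives $0\le u(x)-l(x)\le 2L_x\|x-x_{\mathrm{obs}}\|_2$. Since $x_{\mathrm{obs}}$ was arbitrary, taking the infimum yields
\[
|u(x)-l(x)|\le 2L_x\inf_{x_{\mathrm{obs}}\in\mathcal X^{\mathrm{obs}}_{1:t}}\|x-x_{\mathrm{obs}}\|_2 .
\]
This is the step that needs care. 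The supremum in $l$ and the infimum in $u$ may be attained at different observed points, so we must bound both through a common $x_{\mathrm{obs}}$, using $\sup\ge$ and $\inf\le$ at that point, before optimizing.

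Finally, we substitute this pointwise inequality into the integrand of Theorem~\ref{theo:model_gap}. The factors $(1-O)$ and $\|\nabla_\theta\Delta_\ell(X,\theta)\|_2$ are nonnegative, so the expectation is monotone and so is the supremum over $\theta$; this gives the first display. For the binary cross-entropy case, $\Delta_\ell(x,\theta)=-\theta^\top x$, since $\log\frac{1-\sigma(z)}{\sigma(z)}=-z$. Thus $\nabla_\theta\Delta_\ell=-x$ has norm at most $R$ and no longer depends on $\theta$, so the supremum disappears and we obtain the second display. Measurability of $x\mapsto\inf_{x_{\mathrm{obs}}}\|x-x_{\mathrm{obs}}\|_2$ is automatic, since it is a distance function and hence $1$-Lipschitz.
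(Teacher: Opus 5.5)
Your proposal is correct and follows essentially the same route as the paper: you verify that the clipped limits satisfy AS.~\ref{as:limits} via $(L_x,0)$-sensitivity, and you bound $u(x)-l(x)\le 2L_x\|x-x_{\mathrm{obs}}\|_2$ through a common observed point, using that clipping only shrinks the interval. You then take the infimum over $\mathcal X^{\mathrm{obs}}_{1:t}$, substitute into Theo.~\ref{theo:model_gap}, and use $\nabla_\theta\Delta_\ell(x,\theta)=-x$ for the cross-entropy case, exactly as the paper does (your qualifier ``whenever the result is nonnegative'' is unnecessary, since $\min\{1,a\}-\max\{0,b\}\le a-b$ holds unconditionally).
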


This result shows that the gap between $ G$ and $ \overline G$ depends on the sensitivity constant $L_x$ and the coverage of the feature space $\mathcal X$ by previous models $\theta_1, \dots, \theta_t$. This shows that the gap is smaller when $\alpha(x)$ changes little for similar features $x$ and when the models $\theta_t$ cover a large region of the feature space. Additionally, for a fixed deployed model $\phi$, the bound can only tighten over iterations, as $\mathcal X_{1:t}^{\mathrm{obs}}$ can only grow.

\section{Experiments}
\label{sec:experiments}

We evaluate our framework in a lending application with fairness regularization. This setting was introduced by \citet{liu2018delayed} and studied in long-term fairness work \citep{valdrighi2026longterm, damour2020fairness}, but we adapt it to the performative setting. In this section, we evaluate Robust RRM when $\alpha(x)$ is independent of $\theta$ and present experiments with $L_\theta \neq 0$ in Appendix~\ref{app:experiment_outcome}. Our experiments were designed to answer three main questions regarding Robust RRM: 1) if the performative gap $J(\theta_{t+1}, \theta_{t+1}) - J(\theta_{t}, \theta_{t+1})$ decreases when $t$ increases; 2) whether there are improvements in approximation to the stable solution $\theta^{PS}$ by using the proposed methodology to improve $l$ and $u$ during learning; and 3) if Robust RRM approximates RRM better than compared baselines. To isolate Robust RRM convergence, we evaluate questions 1-3 assuming knowledge of $L_x$. Finally, we evaluate whether Robust RRM performance is robust to misspecifications of $L_x$, and in Appendix~\ref{app:estimating} we evaluate when $L_x$ is estimated from data.

\paragraph{Baselines}
 We compare Robust RRM to RRM with Oracle access to labels. This baseline serves as a reference for attainable performance with full label access and to obtain $\theta^{PS}$. Additionally, we compare our framework to the naive procedure of optimizing only on the selected data, following $\tilde G$ (Accept-only). We also compare our algorithm with baselines from the literature: Affine Risk Minimization (Affine RM) \citep{khorsandi2025tight} --- which was proposed to improve convergence rates in performative prediction; an expectation-maximization algorithm for selective labels (DCEM) \citep{chang2024biased}; and a doubly robust estimator (DR Estimator) --- two baselines that handle partial observation without addressing performativity. See Appendix~\ref{app:implementation} for implementation details of each agent.

\paragraph{Experimental Setup}
We implemented a simulated environment where, at each round, the decision-maker obtained access to $n=100{,}000$ observations of variables $\{x_i, y_i \cdot o_i, o_i\}$. We use a large sample to reduce variance when estimating empirical losses. From this dataset, the decision-maker learns parameters $\theta \in \mathbb R^d$ of a logistic regression model. The model computes the linear score $f_\theta(x) = \theta^\top x$ and is trained with the binary cross-entropy loss with $\sigma(f_\theta(x))$ as the probability of the positive label. Applicants are accepted when $f_\theta(x) \ge \tau$ with $\tau = 0$, equivalent to $\sigma(f_\theta(x)) \ge 0.5$. The environment then receives $\theta$ to output the updated distribution $\{x_i', y_i' \cdot o_i', o_i'\}$. The process repeats for 400 iterations, where at each step the decision-maker updates $\theta$ to minimize risk under the updated data distribution according to each agent's objective. We approximate the risk minimizer using gradient descent for 5,000 steps or until the loss decreases by less than $10^{-7}$ in one iteration.

\subsection{Fair Lending}

In this experiment, we validate our worst-case approach on the long-term fairness environment introduced by \citet{liu2018delayed}, which \citet{valdrighi2026longterm} have already discussed with selective labels. This environment considers that model decisions will alter an individual's credit score, and the decision-maker should satisfy fairness in this new data distribution.

We leverage a continuous adaptation of the setting. Individuals are described by $x = (c, z, 1)$, where $c \in \mathbb R$ is the credit score and $ z$ is a binary sensitive attribute indicating whether individuals self-identify as Black or white and the last is used for the intercept. After deploying model $\theta$, the credit score will be updated by the rule  $c_{\theta} = c + \epsilon (\alpha(x) - 0.5) \cdot \sigma(f_{\theta}(x))$. Whenever $\alpha(x) > 0.5$, the credit score will increase proportionally to the predicted probability $\sigma(f_\theta(x))$, and when $\alpha(x) < 0.5$, it will decrease. Additionally, in this environment, the decision-maker should satisfy demographic parity, which enforces an equal acceptance rate between groups \citep{mehrabi2021survey}, that is, $DP(\phi,\theta) := |\E_{D(\phi)}[\sigma(f_\theta(X)) | Z = 1] - \E_{D(\phi)}[\sigma(f_\theta(X)) | Z = 0]| $. We leverage a continuous and differentiable proxy loss as follows:
\begin{align}
    J_\lambda^{\mathrm{fair}}(\phi, \theta) := J(\phi, \theta) + \lambda (\E_{D(\phi)} [\theta^\top X | Z = 1] -  \E_{D(\phi)} [\theta^\top X | Z = 0])^2
\end{align}
where $\lambda$ is a weight set to $0.3$ in our experiment. This extra term depends only on predicted scores, not on $Y$, and is unaffected by selective labels. We calculate the initial distribution of sensitive groups and credit scores using the FICO \citep{fico} dataset and fit a linear interpolation to simulate $\alpha(\cdot)$. For a detailed description, see Appendix~\ref{app:fico}. We obtain the data distribution $D(\cdot)$ by first updating all credit scores $c$ using the described rule, then resampling $y$ based on $\alpha(x)$. The label distribution satisfies that $L_x = 0.15$.

\begin{figure}[t]
  \centering
  \begin{subfigure}[b]{0.36\textwidth}
    \centering
    \includegraphics[width=\linewidth]{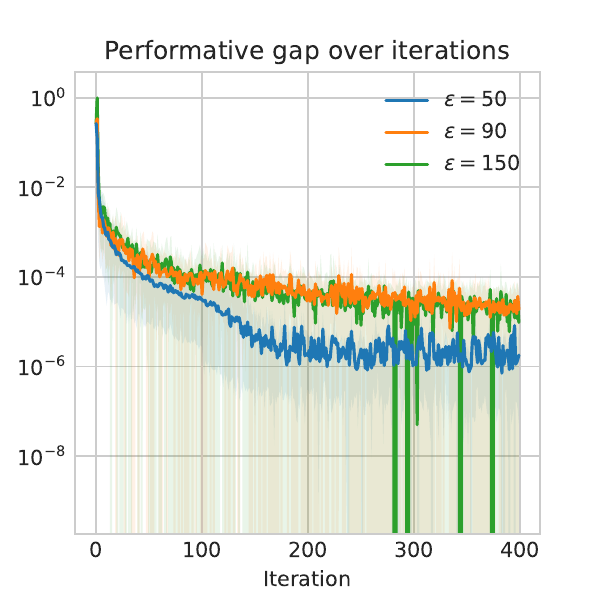}
    \caption{}
    \label{fig:performative_gap}
  \end{subfigure}\hfill
  \begin{subfigure}[b]{0.62\textwidth}
    \centering
    \includegraphics[width=\linewidth]{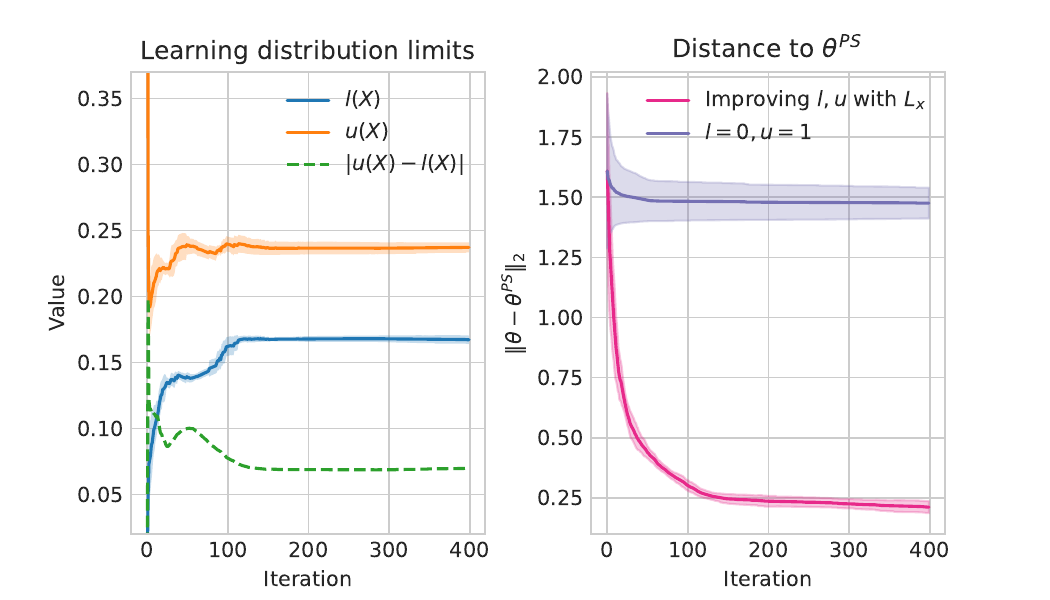}
    \caption{}
    \label{fig:analysis_2}
  \end{subfigure}
  \caption{Evaluation of Robust RRM. \textbf{(a)} Performative gap across different sensitivity levels of the distribution map (averages and 90\% interval over 10 random seeds). \textbf{(b)} Comparison of executing Robust RRM with trivial limits versus learned limits (averages and standard deviations over 10 random seeds).}
  \label{fig:robust_rrm_combined}
\end{figure}



\paragraph{Performative Gap}
As we discuss in Claim~\ref{claim:bound}, the proposed loss $\overline \ell_\phi$ is not necessarily $\beta$-smooth; therefore, it is not straightforward to prove that $\overline G$ is a contraction with a stable point. For this reason, we empirically evaluate whether the proposed approach yields a small performative gap, that is, the increase in loss after deploying $\theta_{t+1}$ in the distribution it induces, compared to the training distribution of $\theta_t$. In Fig.~\ref{fig:performative_gap}, we present the evolution of the performative gap over training for three different sensitivity constants $\epsilon \in \{50, 90, 150\}$. In all configurations, the performative gap reaches below $10^{-4}$, presenting the lowest values when $\epsilon$ is smaller. However, the results differ little between $\epsilon = 90$ and $\epsilon = 150$.

\paragraph{Improvements of Limits}
To analyze the procedure proposed in Sec.~\ref{sec:improving}, we compare two variants of Robust RRM: using the trivial limits $l(x) = 0, u(x)=1$ and updating $l, u$ based on $L_x$ and historical accepted data $\mathcal X^{\mathrm{obs}}_{1:t}$. Here, we show the average values of $l(X)$ and $u(X)$ for the rejected population, that is, $O=0$. We obtain these results with $\epsilon = 50$. As depicted in Fig.~\ref{fig:analysis_2}, the average width of the interval $|u(x) - l(x)|$ quickly decreases to values around $0.1$, decreasing during learning to reach values of $0.08$. At the end of $400$ iterations, the average values of $u$ and $l$ are around $0.24$ and $0.16$, respectively, reflecting the lower probability of the positive label for the rejected population. To compute $\theta^{PS}$, we run the Oracle agent for 400 iterations and select the final $\theta_t$. This improvement in interval width is reflected in how close the solutions $\theta_t$ are to $\theta^{PS}$. As shown in Fig.~\ref{fig:analysis_2}, using the trivial limits keeps the distance around $1.5$, while it decreases during learning to $0.212$ with tighter limits $l, u$.

\paragraph{Comparative Results}
We compare Robust RRM against baselines. In this experiment, we use the methodology to improve $l$ and $u$ and set $\epsilon = 50$. In Fig.~\ref{fig:credit_50}, we display the risk $J(\theta_t, \theta_t)$ calculated with the complete information of $Y$ and the demographic parity $DP(\theta_t, \theta_t)$. These quantities capture loss and unfairness after deployment, including performative effects. Additionally, we present the distance to the stable solution $\theta^{PS}$ of RRM with Oracle access to labels. The Robust RRM agent approximated the Oracle's performance and had the smallest distance to the stable solution. As predicted by Theo.~\ref{theo:convergence}, the solutions $\theta_t$ obtained by Robust RRM had a bounded distance to the true stable solution $\theta^{PS}$, which decreased over iterations. Other agents had a larger distance to the stable solution, with Accept-only and DR Estimator having the largest distances. The DR Estimator agent diverged, presenting a very high performative risk and disparity. This may occur because the necessary assumptions for such an agent do not hold with thresholded decisions. Similar results are obtained with $\epsilon \in \{90, 150\}$ and are presented in Appendix~\ref{app:extra_results}.

\begin{figure}
    \centering
    \includegraphics[width=\linewidth]{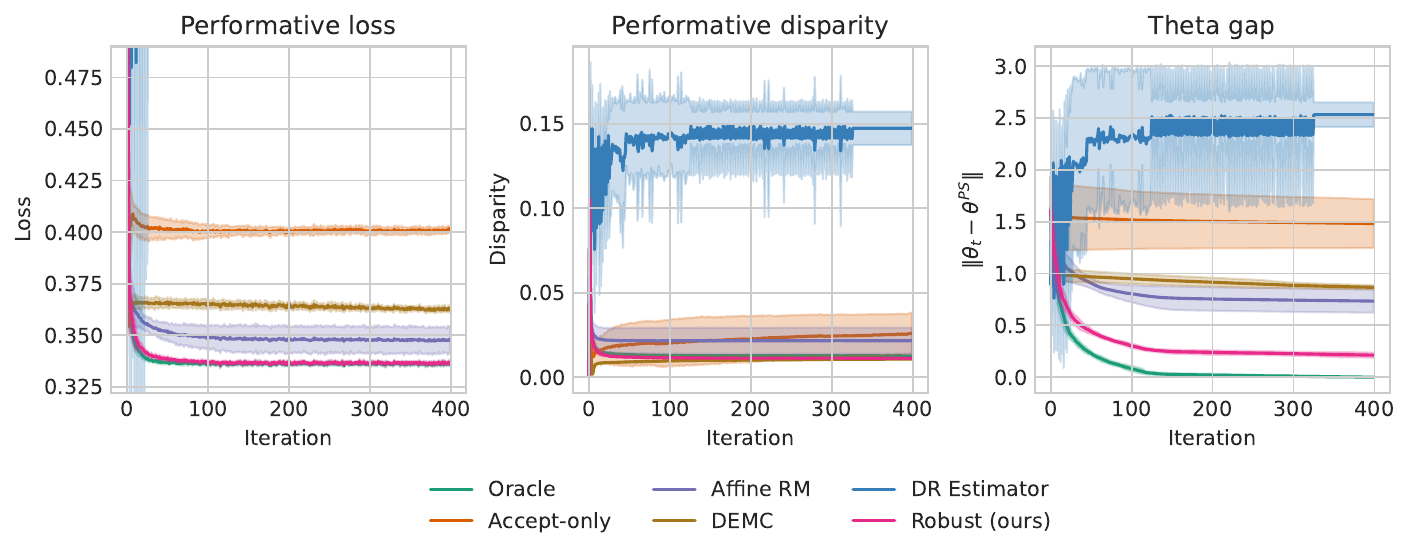}
    \caption{Loss, disparity, and distance to the stable solution for RRM with different objective configurations using $\epsilon = 50$. The y-axis was cropped to improve readability and does not display the full results for the DR Estimator agent. Results are averages and standard deviations over 10 random seeds.}
    \label{fig:credit_50}
\end{figure}

\paragraph{Robustness to the Misspecification of $L_x$}
To evaluate robustness to misspecification of $L_x$, we ran an additional experiment in which we perturbed the $L_x$ value used by Robust RRM by using $L_x+c$ for $c \in \{-0.075, -0.0375, 0.0375, +0.075\}$. Tab.~\ref{tab:robustness} reports results after $400$ retraining iterations. Interestingly, underestimating $L_x$ produced a smaller empirical distance to the stable solution. A smaller value of $L_x$ produces tighter intervals $|u(x)-l(x)|$ and therefore a smaller bound on the gap between the RRM and Robust RRM updates (Theo.~\ref{theo:coverage_gap_invariant}), which in turn reduces $\kappa$ in Theo.~\ref{theo:convergence}. However, when $L_x$ is underestimated, it is not guaranteed that $\alpha(x)\in[l(x),u(x)]$ for all $x$, and the original theoretical guarantee may not hold globally. Empirical results suggest that the limits remain valid for most of the feature space.

\begin{table}[t]
\centering
\caption{Results of Robust RRM with misspecified $L_x$. Results are averages (and standard deviations) over 10 random seeds. The distance to the stable solution reaches a maximum of $0.282$ with +50\% misspecification.}
\label{tab:robustness}
\begin{tabular}{lcc}
\hline
Constant used (\%misspecification)  & Performative risk $(\downarrow)$ & Dist. to $\theta^{PS}$  $(\downarrow)$ \\
\hline
$L_x =0.15$ correctly specified & $0.336 (\pm 0.001)$ & $0.212 (\pm 0.025)$ \\  \hline
$L_x + 0.0375$ (+25\%) & $0.336 (\pm 0.001)$ & $0.241 (\pm 0.020)$ \\  
$L_x - 0.0375$ (-25\%) & $0.336 (\pm 0.001)$ & $0.187 (\pm 0.018)$ \\ 
$L_x + 0.075$ (+50\%) & $0.336 (\pm 0.001)$ & $0.282 (\pm 0.028)$ \\ 
$L_x - 0.075$ (-50\%) & $0.336 (\pm 0.001)$ & $0.159 (\pm 0.024)$ \\ 
\hline
\end{tabular}
\end{table}

\section{Discussion}

\paragraph{Intervention and Observation}
In our framework, observation is defined by the final binary decisions obtained from thresholding predicted scores, while the model parameters define the population response. As \citet{mofakhami2023performative} note, individuals typically have access only to final decisions, not model parameters, which motivates defining distribution shift as a consequence of predictions rather than parameters, consistent with our definition of observation. However, the discontinuity introduced by thresholding can break the distribution map's $\epsilon$-sensitivity, and this setting has not yet been widely studied. In our experiments in the lending environment, we introduce a distribution map that approximates the scenario where the distribution shift is only dependent on the final prediction, by defining it as a function of $f_\theta(x)$. While simple, this environment serves as an example in which performativity is related to final decisions.

\paragraph{Performative Optimality}
We present theoretical results regarding the stability of robust iterates. However, as \citet{miller2021outside} discuss, the stable solution can be far from optimal. We opt to focus on achieving stable solutions, as optimality depends on multiple model deployments and exploration, which typically requires access to a simulator of the world dynamics. With access to such a simulator, partial label observation is not an issue. The social cost of deploying suboptimal models to gather data and achieve optimality is an interesting direction for future work.

\paragraph{Limitations}
Our guarantees are stated at the population level; deriving finite-sample bounds would require separately characterizing the estimation error of the decoupled risk and of the label distribution learned from selectively observed data. Moreover, the limits in Sec.~\ref{sec:improving} require the sensitivity constants $L_x$ and $L_\theta$. Under deterministic threshold policies, these constants are not identifiable from accepted samples alone. Our experiments in Sec.~\ref{sec:experiments} isolate the behavior of Robust RRM by assuming $L_x$ is known, evaluate its robustness to misspecification of $L_x$, and, in Appendix~\ref{app:estimating}, use an estimate obtained from a small set of randomly accepted applicants. Finally, we focus on deterministic threshold policies; stochastic policies can increase label coverage and improve utility and fairness under selective labels \citep{kilbertus2020fair}, and extending our framework to such policies is a promising direction for future work.

\section{Conclusion}

In this paper, we studied the intersection of performative prediction and selective labels, two properties common in many real-world machine learning applications. We focus on the convergence of RRM under partial-label observation with a deterministic threshold, and show that this is sufficient to break convergence to the stable point, even in the absence of performative effects. This motivated a robust formulation of the decoupled risk based on partial knowledge of the label distribution. Our approach leverages the loss function's linearity in the positive-label probability to obtain a strongly convex upper bound that does not depend on the true label. Our robust approach uses this upper bound when we have no access to the true label. When we use it for RRM, we show that it approximates the stable point up to a distance related to the acceptance rate, the convexity of the loss function, and partial knowledge of the label distribution. Our last result shows how to improve our knowledge of the label distribution by leveraging previously accepted samples and their feature sensitivity. We evaluated our approach in a fair lending environment, where it approximated the Oracle agent's performance.

\section*{Acknowledgments}
This project was supported, in part, by the Brazilian Ministry of Science, Technology, and Innovation, with resources from Law nº 8,248, of October 23, 1991, within the scope of PPI-SOFTEX, coordinated by Softex, and published in Arquitetura Cognitiva (Phase 3), DOU 01245.003479/2024 -10, and by the São Paulo Research Foundation (FAPESP), Brazil, process number \#2024/17292-9. 

This work has been partially supported by the project “Society-Aware Machine Learning: The paradigm shift demanded by society to trust machine learning,” funded by the European Union and led by IV (ERC-2021-STG, SAML, 101040177). Views and opinions expressed are, however, those of the author(s) only and do not necessarily reflect those of the European Union or the European Research Council Executive Agency. Neither the European Union nor the granting authority can be held responsible for them.

\paragraph{Gen AI Usage Statement}
The core ideas, proposed methodologies, and experiments were developed solely by the authors. AI tools such as ChatGPT and Grammarly were used only to revise parts of the text for grammatical correctness and improve presentation.

\newpage

\bibliography{paper}

@inproceedings{brown2022performative,
  title={Performative Prediction in a Stateful World},
  author={Brown, Gavin and Hod, Shlomi and Kalemaj, Iden},
  booktitle={International Conference on Artificial Intelligence and Statistics},
  pages={6045--6061},
  year={2022},
  organization={PMLR}
}

@inproceedings{perdomo2020performative,
  title={Performative Prediction},
  author={Perdomo, Juan and Zrnic, Tijana and Mendler-D{\"u}nner, Celestine and Hardt, Moritz},
  booktitle={International Conference on Machine Learning},
  pages={7599--7609},
  year={2020},
  organization={PMLR}
}

@inproceedings{liu2018delayed,
  title={Delayed Impact of Fair Machine Learning},
  author={Liu, Lydia T and Dean, Sarah and Rolf, Esther and Simchowitz, Max and Hardt, Moritz},
  booktitle={International Conference on Machine Learning},
  pages={3150--3158},
  year={2018},
  organization={PMLR}
}

@inproceedings{rateike2024designing,
  title={Designing Long-term Group Fair Policies in Dynamical Systems},
  author={Rateike, Miriam and Valera, Isabel and Forr{\'e}, Patrick},
  booktitle={ACM Conference on Fairness, Accountability, and Transparency},
  pages={20--50},
  year={2024}
}

@inproceedings{valdrighi2026longterm,
title={Long-term Fairness with Selective Labels},
author={Giovani Valdrighi and Isabel Valera and Marcos M. Raimundo},
booktitle={International Conference on Machine Learning},
year={2026},
}

@inproceedings{zezulka2024fair,
  title={From the Fair Distribution of Predictions to the Fair Distribution of Social Goods: Evaluating the Impact of Fair Machine Learning on Long-term Unemployment},
  author={Zezulka, Sebastian and Genin, Konstantin},
  booktitle={ACM Conference on Fairness, Accountability, and Transparency},
  pages={1984--2006},
  year={2024}
}

@inproceedings{mofakhami2023performative,
  title={Performative Prediction with Neural Networks},
  author={Mofakhami, Mehrnaz and Mitliagkas, Ioannis and Gidel, Gauthier},
  booktitle={International Conference on Artificial Intelligence and Statistics},
  pages={11079--11093},
  year={2023},
  organization={PMLR}
}

@inproceedings{hardt2016strategic,
  title={Strategic classification},
  author={Hardt, Moritz and Megiddo, Nimrod and Papadimitriou, Christos and Wootters, Mary},
  booktitle={ACM Conference on Innovations in Theoretical Computer Science},
  pages={111--122},
  year={2016}
}

@article{li2024stochastic,
  title={Stochastic optimization schemes for performative prediction with nonconvex loss},
  author={Li, Qiang and Wai, Hoi-To},
  journal={Advances in Neural Information Processing Systems},
  volume={37},
  pages={8673--8697},
  year={2024}
}

@article{hardt2025performative,
  title={Performative prediction: Past and future},
  author={Hardt, Moritz and Mendler-D{\"u}nner, Celestine},
  journal={Statistical Science},
  volume={40},
  number={3},
  pages={417--436},
  year={2025},
  publisher={Institute of Mathematical Statistics}
}

@article{kehrenberg2026dissecting,
  title={Dissecting Performative Prediction: A Comprehensive Survey},
  author={Kehrenberg, Thomas and Bautiste, Francisco Javier Sanguino and Lozano, Jose and Quadrianto, Novi},
  journal={ACM Computing Surveys},
  volume={58},
  number={13},
  pages={1--35},
  year={2026},
  publisher={ACM New York, NY}
}

@inproceedings{kilbertus2020fair,
  title={Fair Decisions Despite Imperfect Predictions},
  author={Kilbertus, Niki and Rodriguez, Manuel Gomez and Sch{\"o}lkopf, Bernhard and Muandet, Krikamol and Valera, Isabel},
  booktitle={International Conference on Artificial Intelligence and Statistics},
  pages={277--287},
  year={2020},
  organization={PMLR}
}

@article{dunner2022anticipating,
  title={Anticipating Performativity by Predicting from Predictions},
  author={Mendler-D{\"u}nner, Celestine and Ding, Frances and Wang, Yixin},
  journal={Advances in Neural Information Processing Systems},
  volume={35},
  pages={31171--31185},
  year={2022}
}

@inproceedings{lakkaraju2017selective,
  title={The Selective Labels Problem: Evaluating Algorithmic Predictions in the Presence of Unobservables},
  author={Lakkaraju, Himabindu and Kleinberg, Jon and Leskovec, Jure and Ludwig, Jens and Mullainathan, Sendhil},
  booktitle={ACM SIGKDD International Conference on Knowledge Discovery and Data Mining},
  pages={275--284},
  year={2017}
}

@inproceedings{izzo2021learn,
  title={How to Learn When Data Reacts to Your Model: Performative Gradient Descent},
  author={Izzo, Zachary and Ying, Lexing and Zou, James},
  booktitle={International Conference on Machine Learning},
  pages={4641--4650},
  year={2021},
  organization={PMLR}
}

@inproceedings{izzo2022learn,
  title={How to Learn When Data Gradually Reacts to Your Model},
  author={Izzo, Zachary and Zou, James and Ying, Lexing},
  booktitle={International Conference on Artificial Intelligence and Statistics},
  pages={3998--4035},
  year={2022},
  organization={PMLR}
}

@inproceedings{liu2024two,
  title={Two-timescale Derivative Free Optimization for Performative Prediction with {Markovian} Data},
  author={Liu, Haitong and Li, Qiang and Wai, Hoi To},
  booktitle={International Conference on Machine Learning},
  pages={31425--31450},
  year={2024},
  organization={PMLR}
}

@misc{fico,
author={{US} Federal Reserve},
title={Report to the Congress on credit scoring and its effects on the availability and affordability of credit},
year={2007}
}

@article{harris2023strategic,
  title={Strategic apple tasting},
  author={Harris, Keegan and Podimata, Chara and Wu, Steven Z},
  journal={Advances in Neural Information Processing Systems},
  volume={36},
  pages={79918--79945},
  year={2023}
}

@inproceedings{hu2022achieving,
  title={Achieving long-term fairness in sequential decision making},
  author={Hu, Yaowei and Zhang, Lu},
  booktitle={AAAI Conference on Artificial Intelligence},
  volume={36},
  pages={9549--9557},
  year={2022}
}

@article{mishler2022fair,
  title={Fair when trained, unfair when deployed: Observable fairness measures are unstable in performative prediction settings},
  author={Mishler, Alan and Dalmasso, Niccol{\`o}},
  journal={arXiv preprint arXiv:2202.05049},
  year={2022}
}

@inproceedings{jin2026addressing,
  title={Addressing polarization and unfairness in performative prediction},
  author={Jin, Kun and Xie, Tian and Liu, Yang and Zhang, Xueru},
  booktitle={AAAI Conference on Artificial Intelligence},
  volume={40},
  pages={22408--22416},
  year={2026}
}

@book{mackenzie2007economists,
  title={Do Economists Make Markets?: On the Performativity of Economics},
  author={MacKenzie, Donald A and Muniesa, Fabian and Siu, Lucia},
  year={2007},
  publisher={Princeton University Press}
}

@article{mehrabi2021survey,
  title={A survey on bias and fairness in machine learning},
  author={Mehrabi, Ninareh and Morstatter, Fred and Saxena, Nripsuta and Lerman, Kristina and Galstyan, Aram},
  journal={ACM Computing Surveys (CSUR)},
  volume={54},
  number={6},
  pages={1--35},
  year={2021},
  publisher={ACM New York, NY, USA}
}

@article{mendler2020stochastic,
  title={Stochastic optimization for performative prediction},
  author={Mendler-D{\"u}nner, Celestine and Perdomo, Juan and Zrnic, Tijana and Hardt, Moritz},
  journal={Advances in Neural Information Processing Systems},
  volume={33},
  pages={4929--4939},
  year={2020}
}

@inproceedings{li2022state,
  title={State dependent performative prediction with stochastic approximation},
  author={Li, Qiang and Wai, Hoi-To},
  booktitle={International Conference on Artificial Intelligence and Statistics},
  pages={3164--3186},
  year={2022},
  organization={PMLR}
}

@inproceedings{jagadeesan2022regret,
  title={Regret minimization with performative feedback},
  author={Jagadeesan, Meena and Zrnic, Tijana and Mendler-D{\"u}nner, Celestine},
  booktitle={International Conference on Machine Learning},
  pages={9760--9785},
  year={2022},
  organization={PMLR}
}

@inproceedings{
li2025clipped,
title={Clipped {SGD} Algorithms for Performative Prediction: Tight Bounds for Stochastic Bias and Remedies},
author={Qiang Li and Michal Yemini and Hoi To Wai},
booktitle={International Conference on Machine Learning},
year={2025},
}

@inproceedings{mandal2023performative,
  title={Performative reinforcement learning},
  author={Mandal, Debmalya and Triantafyllou, Stelios and Radanovic, Goran},
  booktitle={International Conference on Machine Learning},
  pages={23642--23680},
  year={2023},
  organization={PMLR}
}

@article{yan2023zero,
  title={Zero-regret performative prediction under inequality constraints},
  author={Yan, Wenjing and Cao, Xuanyu},
  journal={Advances in Neural Information Processing Systems},
  volume={36},
  pages={1298--1308},
  year={2023}
}

@inproceedings{miller2021outside,
  title={Outside the echo chamber: Optimizing the performative risk},
  author={Miller, John P and Perdomo, Juan C and Zrnic, Tijana},
  booktitle={International Conference on Machine Learning},
  pages={7710--7720},
  year={2021},
  organization={PMLR}
}

@inproceedings{zhao2025online,
  title={Online Strategic Classification With Noise and Partial Feedback},
  author={Zhao, Tianrun and Mao, Xiaojie and Liang, Yong},
  booktitle={Advances in Neural Information Processing Systems},
  year={2025},
}

@inproceedings{chen2025learning,
title={Learning with Selectively Labeled Data from Multiple Decision-makers},
author={Jian Chen and Zhehao Li and Xiaojie Mao},
booktitle={International Conference on Machine Learning},
year={2025},
}

@inproceedings{damour2020fairness,
  title={Fairness is not static: deeper understanding of long term fairness via simulation studies},
  author={D'Amour, Alexander and Srinivasan, Hansa and Atwood, James and Baljekar, Pallavi and Sculley, David and Halpern, Yoni},
  booktitle={ACM Conference on Fairness, Accountability, and Transparency},
  pages={525--534},
  year={2020}
}

@article{pombal2022prisoners,
  title={Prisoners of their own devices: How models induce data bias in performative prediction},
  author={Pombal, Jos{\'e} and Saleiro, Pedro and Figueiredo, M{\'a}rio AT and Bizarro, Pedro},
  journal={arXiv preprint arXiv:2206.13183},
  year={2022}
}

@inproceedings{rateike2022don,
  title={Don’t throw it away! the utility of unlabeled data in fair decision making},
  author={Rateike, Miriam and Majumdar, Ayan and Mineeva, Olga and Gummadi, Krishna P and Valera, Isabel},
  booktitle={ACM Conference on Fairness, Accountability, and Transparency},
  pages={1421--1433},
  year={2022}
}

@inproceedings{fawkes2024the,
	title        = {The Fragility of Fairness: Causal Sensitivity Analysis for Fair Machine Learning},
	author       = {Jake Fawkes and Nic Fishman and Mel Andrews and Zachary Chase Lipton},
	year         = {2024},
	booktitle    = {Advances in Neural Information Processing Systems Datasets and Benchmarks Track},
}

@inproceedings{keswani2024fair,
author = {Keswani, Vijay and Mehrotra, Anay and Celis, L. Elisa},
title = {Fair classification with partial feedback: an exploration-based data collection approach},
year = {2024},
booktitle = {International Conference on Machine Learning},
}

@inproceedings{frauen2024fair,
  title={Fair off-policy learning from observational data},
  author={Frauen, Dennis and Melnychuk, Valentyn and Feuerriegel, Stefan},
  booktitle={International Conference on Machine Learning},
  pages={13943--13972},
  year={2024}
}

@inproceedings{creager_causal_2020,
  title={Causal modeling for fairness in dynamical systems},
  author={Creager, Elliot and Madras, David and Pitassi, Toniann and Zemel, Richard},
  booktitle={International Conference on Machine Learning},
  pages={2185--2195},
  year={2020},
  organization={PMLR}
}

@InProceedings{kim2023making,
  author =	{Kim, Michael P. and Perdomo, Juan C.},
  title =	{Making Decisions Under Outcome Performativity},
  booktitle =	{Innovations in Theoretical Computer Science Conference},
  pages =	{79:1--79:15},
  series =	{Leibniz International Proceedings in Informatics (LIPIcs)},
  year =	{2023},
}

@inproceedings{khorsandi2025tight,
title={Tight Lower Bounds and Improved Convergence in Performative Prediction},
author={Pedram Khorsandi and Rushil Gupta and Mehrnaz Mofakhami and Simon Lacoste-Julien and Gauthier Gidel},
booktitle={Advances in Neural Information Processing Systems},
year={2025},
}

@inproceedings{chang2024biased,
author = {Chang, Trenton and Wiens, Jenna},
title = {From biased selective labels to pseudo-labels: an expectation-maximization framework for learning from biased decisions},
year = {2024},
booktitle = {International Conference on Machine Learning},
}

@inproceedings{boeken2024evaluating,
  title={Evaluating and correcting performative effects of decision support systems via causal domain shift},
  author={Boeken, Philip and Zoeter, Onno and Mooij, Joris},
  booktitle={Causal Learning and Reasoning},
  pages={551--569},
  year={2024},
  organization={PMLR}
}

@book{casella2002statistical,
  title={Statistical inference},
  author={Casella, George and Berger, Roger},
  year={2002},
  publisher={CRC press}
}

\newpage

\appendix

\section{Proofs}
\label{app:proofs}

\subsection{Proof of Claim~\ref{claim:bound}}

We prove the inequality by first proving that the loss:
\begin{align}
    \tilde \ell_\phi(x, \theta) := \max_{c \in \{l(x, \phi), u(x, \phi)\}} \ell(x, 0, \theta) + c \Delta_\ell(x, \theta)
\end{align}
satisfies $\E_{D(\phi)}[\ell(X, Y, \theta)] \le \E_{D(\phi)}[\tilde \ell_\phi(X, \theta)] $ and then that $\E_{D(\phi)}[\tilde \ell_\phi(X, \theta)] \le \E_{D(\phi)}[\overline \ell_\phi(X, \theta)]$. We prove each inequality separately. First, we can open the conditional expectation of the loss:
\begin{align}
    \mathbb E_{D(\phi)}[\ell(X, Y, \theta) | X =x] &= \alpha_\phi (x) \ell(x, 1, \theta)  +(1 - \alpha_\phi(x)) \ell(x, 0, \theta) \\
    &= \ell(x, 0, \theta) + \alpha_\phi(x) \Delta_\ell(x, \theta)
\end{align}
Since this expression is a linear function of $\alpha_\phi(x)$, and if AS.~\ref{as:limits} holds, then $\alpha_\phi(x) \in [l(x, \phi), u(x, \phi)]$. Thus:
\begin{align}
     \ell(x, 0, \theta) + \alpha_\phi(x) \Delta_\ell(x, \theta) \le \max_{c \in \{l(x, \phi), u(x, \phi)\}} \ell(x, 0, \theta) + c \Delta_\ell(x, \theta)
\end{align}
This shows that $\mathbb E_{D(\phi)}[\ell(X, Y, \theta) | X = x] \le \tilde \ell_\phi(x, \theta)$, which by integrating over the marginal distribution of $X$ permits us to conclude that:
\begin{align}
    \mathbb E_{D(\phi)}[\ell(X, Y,\theta)] \le \mathbb E_{D(\phi)}[\tilde \ell_\phi(X, \theta)]
\end{align}
Let $A := l(x, \phi) \Delta_\ell(x, \theta)$ and $B := u(x, \phi) \Delta_\ell(x, \theta)$. The second inequality can be proved with the following steps:
\begin{align}
\tilde \ell_\phi(x, \theta) &= \ell(x, 0, \theta) + \max \{A, B \} \\
&= \ell(x, 0, \theta) + \log(\max\{e^A, e^B\}) \\
 &< \ell(x, 0, \theta) + \log(e^A + e^B) \\
 &= \overline \ell_\phi(x, \theta)
\end{align}
The third line follows from the inequality $\max\{e^A, e^B\} < e^A + e^B$. Finally, as this result is valid for any $x, \theta$, we can integrate over the distribution of $X$ to obtain:
\begin{align}
    \mathbb E_{D(\phi)}[\tilde \ell_\phi(X,\theta)] \le \mathbb E_{D(\phi)}[\overline \ell_\phi(X, \theta)]
\end{align}
We now prove that $\overline \ell_\phi$ is $\gamma$-strongly convex. Fix $x$ and $\phi$, and write $a:=l(x,\phi), b:=u(x,\phi)$. For $c\in\{a,b\}$, define $g_c(\theta):=(1-c)\ell(x,0,\theta)+c\ell(x,1,\theta)$. Since $c\in[0,1]$ and $\ell(x,y,\theta)$ is $\gamma$-strongly convex in $\theta$ for each $y$, each $g_c$ is also $\gamma$-strongly convex. Hence
\begin{align}
    h_c(\theta):=g_c(\theta)-\frac{\gamma}{2}\|\theta\|_2^2
\end{align}
is convex. Moreover,
\begin{align}    
\overline\ell_\phi(x,\theta)    &= \log\left(e^{g_a(\theta)}+e^{g_b(\theta)}\right) \\    &= \frac{\gamma}{2}\|\theta\|_2^2    + \log\left(e^{h_a(\theta)}+e^{h_b(\theta)}\right).
\end{align}
The log-sum-exp of convex functions is convex, so $\overline\ell_\phi$ is $\gamma$-strongly convex in $\theta$.

The loss $\overline\ell_\phi$ is not necessarily $\beta$-smooth in $x$ because $ l$ and $ u$ may not be smooth. It suffices to give a counterexample. Let $\Theta=[-1,1]$, $\mathcal X=[-1,1]$, and\[\ell(x,0,\theta)=\frac{\gamma}{2}\theta^2,\qquad\ell(x,1,\theta)=\frac{\gamma}{2}\theta^2+\theta.\]Then $\Delta_\ell(x,\theta)=\theta$. Choose\[u(x,\phi)=1,\qquad l(x,\phi)=\mathbf 1\{x\ge 0\}.\]These functions satisfy AS.~\ref{as:limits}, for instance, with $\alpha_\phi(x) = 1$ for all $x$, but\[\nabla_\theta \overline\ell_\phi(x,0)=\begin{cases}1/2, & x<0,\\1, & x\ge 0.\end{cases}\]Taking $x_n=-1/n$ and $x'_n=0$, we get\[\left|\nabla_\theta\overline\ell_\phi(x_n,0)-\nabla_\theta\overline\ell_\phi(x'_n,0)\right|=\frac12,\qquad|x_n-x'_n|=\frac1n.\]Thus no finite $\beta$ can satisfy the smoothness inequality. Hence $\overline\ell_\phi$ is $\gamma$-strongly convex, but not $\beta$-smooth in general.

\subsection{Proof of Theo.~\ref{theo:model_gap}}

\begin{proof}
    As $f(\cdot) := J(\phi, \cdot)$ and $\overline f(\cdot) := \overline J(\phi, \cdot)$ are both $\gamma$-strongly convex functions then: 
\begin{align}
    \|G(\phi) - \overline G(\phi)\|_2 \le \dfrac{1}{\gamma} \sup_{\theta \in \Theta} \|\nabla_\theta f(\theta) - \nabla_\theta \overline f(\theta)\|_2
\end{align}
We have that the difference between gradients is:
\begin{align}
    \nabla_\theta f(\theta) -  \nabla_\theta \overline f(\theta) = \mathbb E_{D(\phi)}\left [(1 - O) (\nabla_\theta \ell(X, Y,\theta) - \nabla_\theta \overline \ell_\phi(X, \theta))  \right]
\end{align}
Since $O = \mathbf{1}\{f_\phi(X) \ge \tau\}$ is a deterministic function of $X$, we can condition on $X = x$ inside the expectation, and we have:
\begin{align}
    \mathbb E_{D(\phi)}[\nabla_\theta \ell(X, Y,\theta)|X=x] &= \nabla_\theta \ell(x, 0, \theta) + \alpha_\phi(x)  \nabla_\theta \Delta_\ell(x, \theta)  \\ 
    \mathbb E_{D(\phi)}[\nabla_\theta \overline \ell_\phi(X, \theta)|X=x] &= \nabla_\theta \ell(x, 0, \theta) +  \nabla_\theta\log(e^{l(x, \phi) \Delta_\ell(x, \theta)} + e^{u(x, \phi) \Delta_\ell(x, \theta)}) \\ 
    &= \nabla_\theta \ell(x, 0, \theta) + w_\phi(x, \theta) \nabla_\theta \Delta_\ell(x, \theta)
\end{align}
where 
\begin{align}
 w_\phi(x, \theta)=\dfrac{e^{l(x, \phi) \Delta_\ell(x, \theta)} l(x, \phi)  + e^{u(x, \phi) \Delta_\ell(x, \theta)}u(x, \phi)}{e^{l(x, \phi) \Delta_\ell(x, \theta)} + e^{u(x, \phi) \Delta_\ell(x, \theta)}}
\end{align}
And then we have that:
\begin{align}
    \nabla_\theta f(\theta) -  \nabla_\theta \overline f(\theta) &= \mathbb E_{D(\phi)} \left [ (1 - O)(\alpha_{\phi}(X) - w_\phi(X, \theta))   \nabla_\theta\Delta_\ell(X, \theta) \right ] \\
    \implies \|\nabla_\theta f(\theta) -  \nabla_\theta \overline f(\theta)\|_2 &\le \mathbb E_{D(\phi)} \left [ (1 - O)|\alpha_{\phi}(X) - w_\phi(X, \theta)|  \| \nabla_\theta\Delta_\ell(X, \theta) \|_2\right ]
\end{align}
As $w_\phi(x, \theta)$ is a weighted mean of $l(x, \phi)$ and $u(x, \phi)$, then $w_\phi(x, \theta) \in [l(x, \phi), u(x, \phi)]$, then, $|\alpha_{\phi}(x) - w_\phi(x, \theta)| \le |u(x, \phi)  - l(x, \phi)|$ for any $x, \phi$. Finally, we can conclude that:
\begin{align}
        \|G(\phi) - \overline G(\phi)\|_2 \le \dfrac{1}{\gamma} \sup_{\theta \in \Theta} \mathbb E_{D(\phi)}[(1 - O)|u(X, \phi) - l(X, \phi)| \|\nabla_\theta \Delta_\ell(X, \theta) \|_2]
\end{align}
We analyze $\nabla_\theta \Delta_\ell(X, \theta)$ for common definitions of $\ell$:

\begin{itemize}
    \item Logistic regression: $\ell(x, y, \theta) := - \left ( y\log (\sigma(\theta^\top x))   + (1-y) \log(1 - \sigma(\theta^\top x)) \right )$ where $\sigma : \mathbb R \to (0, 1)$ is the logistic function. We then have $\Delta_\ell(x, \theta) = - \log(\sigma(\theta^\top x) / (1 - \sigma(\theta^\top x))) = -\theta^\top x$ and then $\|\nabla_\theta \Delta_\ell    (x, \theta)\|_2 = \|x\|_2$.
    \item Mean-squared error: $\ell(x, y, \theta) := (y - \theta^\top x)^2$. Then, we have that $\Delta_\ell(x, \theta) = (1 - \theta^\top x)^2 - (0 - \theta^\top x)^2=1 -2\theta^\top x$ and $\nabla_\theta \Delta_\ell(x, \theta) = -2(1 - \theta^\top  x) x - 2(\theta^\top x)x = -2x$ and $\|\nabla_\theta \Delta_\ell(x, \theta)\|_2 = 2\|x\|_2$.
\end{itemize}
Note that adding a regularization term $\|\theta\|_2^2$ to ensure that $\ell$ is $\gamma$-strongly convex leaves $\Delta_\ell (x, \theta)$ unchanged, since the penalization cancels out.
\end{proof}

\subsection{Proof of Theo.~\ref{theo:convergence}}

\begin{proof}
For any $\theta \in \Theta$, we can bound the distance between the worst-case update $\overline G(\theta)$ and the stable solution $\theta^{PS}$ using the triangle inequality and $G(\theta^{PS}) = \theta^{PS}$:
    \begin{align}
        \|\overline G(\theta) - \theta^{PS}\|_2 &\le \|\overline G(\theta) - G(\theta) \|_2 + \|G(\theta) - G(\theta^{PS})\|_2 \\
        & \le \kappa + \eta\|\theta - \theta^{PS}\|_2
    \end{align}
where $\eta := \epsilon \beta / \gamma$. Unrolling this recursion, the sequence defined as $\theta_{t+1} = \overline G(\theta_t)$ satisfies:
\begin{align}
    \| \theta_t - \theta^{PS}\|_2 \le \eta^t \| \theta_0 - \theta^{PS}\|_2 + \kappa \sum_{s=0}^{t-1} \eta^s = \eta^t \| \theta_0 - \theta^{PS}\|_2 + \dfrac{1- \eta^t}{1 - \eta} \kappa
\end{align}
where the right side converges to $\kappa/(1 - \eta)$ when $t \to \infty$, since $\eta < 1$.
\end{proof}

\subsection{Proof of Theo.~\ref{theo:coverage_gap_invariant}}

\begin{proof}
As $\alpha$ is $(L_x, 0)$-sensitive, for every $x_{\mathrm{obs}} \in \mathcal X^{\mathrm{obs}}_{1:t}$ we have $\alpha(x_{\mathrm{obs}}) - L_x\|x_{\mathrm{unobs}} - x_{\mathrm{obs}}\|_2 \le \alpha(x_{\mathrm{unobs}}) \le \alpha(x_{\mathrm{obs}}) + L_x\|x_{\mathrm{unobs}} - x_{\mathrm{obs}}\|_2$. Taking the supremum and infimum over $x_{\mathrm{obs}}$, and using $\alpha(x_{\mathrm{unobs}}) \in [0, 1]$ for the clipping, the limits in Eqs.~\ref{eq:lower}--\ref{eq:upper} satisfy $0 \le l(x_{\mathrm{unobs}})\le \alpha(x_{\mathrm{unobs}})\le u(x_{\mathrm{unobs}}) \le 1$, so AS.~\ref{as:limits} holds. Moreover, since clipping can only reduce $u(x_{\mathrm{unobs}}) - l(x_{\mathrm{unobs}})$, for any $x_{\mathrm{obs}}$ we have that:
\begin{align}
    \forall x_{\mathrm{obs}} \in \mathcal X^{\mathrm{obs}}_{1:t} , u(x_{\mathrm{unobs}})-l(x_{\mathrm{unobs}}) &\le 2L_x\|x_{\mathrm{unobs}}-x_{\mathrm{obs}}\|_2 \implies \\
    u(x_{\mathrm{unobs}})-l(x_{\mathrm{unobs}}) &\le 2L_x \inf_{x_{\mathrm{obs}}\in\mathcal X^{\mathrm{obs}}_{1:t}}\|x_{\mathrm{unobs}}-x_{\mathrm{obs}}\|_2
\end{align}
Substituting this bound on $|u(\cdot) - l(\cdot)|$ into Theo.~\ref{theo:model_gap} proves the general result and the special case for binary cross-entropy.
\end{proof}

\section{Bounding the Performative Risk of Robust RRM}
\label{app:bound_performative}

Theo.~\ref{theo:convergence} bounds the distance between iterations of Robust RRM and the stable solution in parameter space; however, it does not bound the performative risk of Robust RRM. We can bound the performative risk by combining Theo.~\ref{theo:convergence} with Lemma 2.1 of \citep{jagadeesan2022regret} that has extra assumptions on the loss function $\ell$. Define as $PR(\theta) := J(\theta, \theta)$ the performative risk. Then, the following lemma holds:

\begin{lemma}[Adapted from Lemma 2.1 of \citep{jagadeesan2022regret}]
\label{lemma:lipschitz}
If the loss $\ell(x, y, \theta)$ is $L_1-$Lipschitz in $(x, y)$ and $L_2-$Lipschitz in $\theta$, and the distribution map $D(\cdot)$ is $\epsilon-$sensitive, then the performative risk $PR(\cdot)$ is $(\epsilon L_1 + L_2)-$Lipschitz.
\end{lemma}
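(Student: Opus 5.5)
The plan is to split the difference $PR(\theta)-PR(\phi)=J(\theta,\theta)-J(\phi,\phi)$ through the mixed term $J(\theta,\phi)$, and bound the two pieces separately. In the first piece only the evaluated model changes; in the second only the distribution changes. By the triangle inequality,
\begin{align}
|PR(\theta)-PR(\phi)| \le |J(\theta,\theta)-J(\theta,\phi)| + |J(\theta,\phi)-J(\phi,\phi)|.
\end{align}

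For the first term, the distribution $D(\theta)$ is held fixed. Since $\ell$ is $L_2$-Lipschitz in $\theta$ for every $(x,y)$, pointwise $|\ell(x,y,\theta)-\ell(x,y,\phi)|\le L_2\|\theta-\phi\|_2$. Taking expectations under $D(\theta)$ bounds this term by $L_2\|\theta-\phi\|_2$.

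For the second term, the loss $\ell(\cdot,\cdot,\phi)$ is fixed and only the distribution changes from $D(\phi)$ to $D(\theta)$. Here I would use Kantorovich--Rubinstein duality. For any $L_1$-Lipschitz function $g$ on $\mathcal X\times\mathcal Y$,
\begin{align}
|\E_{P}[g]-\E_{Q}[g]| \le L_1\,\mathcal W_1(P,Q).
\end{align}
Take $g(x,y)=\ell(x,y,\phi)$, which is $L_1$-Lipschitz in $(x,y)$ by assumption, with $P=D(\theta)$ and $Q=D(\phi)$. Then $\epsilon$-sensitivity gives the bound $L_1\,\mathcal W_1(D(\theta),D(\phi))\le \epsilon L_1\|\theta-\phi\|_2$. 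Adding the two pieces yields the Lipschitz constant $\epsilon L_1+L_2$.

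The only step that needs care is the duality step. The metric on $\mathcal X\times\mathcal Y$ used for Lipschitzness of $\ell$ must be the same one underlying $\mathcal W_1$ in the sensitivity definition; both are the Euclidean norm on $(x,y)$, as in the smoothness condition. One must also ensure the expectations are finite, which follows from Lipschitzness together with finite first moments, implicit in $\mathcal W_1$ being finite. Everything else is routine. Once the lemma is in hand, it combines with Theo.~\ref{theo:convergence} to give $PR(\theta_t)-PR(\theta^{PS})\le(\epsilon L_1+L_2)\|\theta_t-\theta^{PS}\|_2$.
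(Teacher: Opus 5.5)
Your proposal is correct and matches the intended argument: the paper gives no proof of its own and imports the result from Lemma 2.1 of \citet{jagadeesan2022regret}, whose proof is exactly your decomposition. That is, a triangle inequality through the mixed term $J(\theta,\phi)$, Lipschitzness in $\theta$ for the first piece, and Kantorovich--Rubinstein duality combined with $\epsilon$-sensitivity for the second.
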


Now, we can use the Lipschitzness of $PR$ to bound the iterates of Robust RRM.

\begin{proposition}[Performative Risk of Robust RRM]
Let $\theta_t$ be an iterate of Robust RRM as in Def.~\ref{def:robust_rrm}. With the same assumptions as in Lemma~\ref{lemma:lipschitz} and from Theo.~\ref{theo:convergence}, then:
\begin{align}\limsup_{t \to \infty} PR(\theta_t) \le PR(\theta^{PS}) + (\epsilon L_1 + L_2) \frac{\kappa}{1 - \eta}
\end{align}
\end{proposition}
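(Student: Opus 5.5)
The plan is to combine the Lipschitz property of the performative risk from Lemma~\ref{lemma:lipschitz} with the asymptotic distance bound of Theo.~\ref{theo:convergence}. Under the assumptions of Lemma~\ref{lemma:lipschitz}, $PR(\cdot)$ is $(\epsilon L_1 + L_2)$-Lipschitz on $\Theta$. Hence, for every iterate $\theta_t$ of Robust RRM,
\begin{align}
PR(\theta_t) \le PR(\theta^{PS}) + |PR(\theta_t) - PR(\theta^{PS})| \le PR(\theta^{PS}) + (\epsilon L_1 + L_2)\|\theta_t - \theta^{PS}\|_2 .
\end{align}
This gives a pointwise bound for each $t$.

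Next, I would insert the finite-time estimate of Theo.~\ref{theo:convergence}:
\begin{align}
PR(\theta_t) \le PR(\theta^{PS}) + (\epsilon L_1 + L_2)\left(\eta^t\|\theta_0-\theta^{PS}\|_2 + \frac{1-\eta^t}{1-\eta}\kappa\right).
\end{align}
Since $\eta<1$, the right-hand side converges to $PR(\theta^{PS}) + (\epsilon L_1+L_2)\kappa/(1-\eta)$. Taking $\limsup_{t\to\infty}$ of both sides then yields the claim. Equivalently, one can use monotonicity of $\limsup$ under the nonnegative factor $(\epsilon L_1 + L_2)$ together with $\limsup_t \|\theta_t-\theta^{PS}\|_2 \le \kappa/(1-\eta)$.

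There is no real analytic obstacle. The one point to check is that both sets of hypotheses can hold simultaneously: those of Theo.~\ref{theo:performative}, namely strong convexity, $\beta$-smoothness and $\epsilon$-sensitivity with $\eta<1$, so that $\theta^{PS}$ exists and Theo.~\ref{theo:convergence} applies, and the Lipschitz conditions of Lemma~\ref{lemma:lipschitz}. These are compatible on the compact set $\Theta$ (e.g., regularized cross-entropy with bounded features). I would also note that $\kappa<\infty$ because $\Theta$ is compact, so $\kappa \le \mathrm{diam}(\Theta)$ and the bound is meaningful.
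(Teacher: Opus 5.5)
Your proposal is correct and follows essentially the same route as the paper's proof. Both apply the $(\epsilon L_1 + L_2)$-Lipschitzness of $PR$ from Lemma~\ref{lemma:lipschitz} to $\|\theta_t - \theta^{PS}\|_2$, substitute the finite-time bound of Theo.~\ref{theo:convergence}, and let $t \to \infty$ using $\eta < 1$.
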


\begin{proof}
    The proof is direct from the application of the Lipschitzness of $PR(\cdot)$ and the bound on the distance $\| \theta_t - \theta^{PS}\|_2$:
    \begin{align}
        | PR(\theta_t ) - PR(\theta^{PS})| &\le (\epsilon L_1 + L_2) \| \theta_t  - \theta^{PS}\|_2 \\
        & \le (\epsilon L_1 + L_2) \left ( \eta^t\| \theta_0  - \theta^{PS}\|_2   + \dfrac{1 - \eta^t}{1 - \eta} \kappa\right)
    \end{align}
Moving $PR(\theta^{PS})$ to the right side and applying the limit $t \to \infty$ concludes the proof.
\end{proof}

While the bound may be conservative when $L_1$ or $L_2$  is large, our empirical evaluations consistently show that Robust RRM matches the performative risk attained by Oracle RRM.

\section{Improving Label Distribution Limits with Model Dependence}
\label{app:results_dependent}

In this section, we extend the methodology from Sec.~\ref{sec:improving} when the label is dependent on the deployed model $\theta$, that is, $L_\theta \neq 0$. We use similar steps to the ones presented in Sec.~\ref{sec:improving}. First, let $\mathcal X_s^{\mathrm{obs}}$ be the support of the features accepted at round $s$ and define $H_{1:t} := \bigcup_{s=1}^t (\mathcal X_s^{\mathrm{obs}} \times \{\theta_s\})$ as the set of previously accepted features $x$ paired with the respective deployed model. Then, let $\phi$ be the currently deployed model and $x_{\mathrm{unobs}}$ an unobserved sample. We define:
\begin{align}
    d(x_{\mathrm{unobs}}, x_{\mathrm{obs}}, \phi, \theta') :=& L_x \| x_{\mathrm{unobs}} - x_{\mathrm{obs}}\|_2 + L_\theta \| \phi - \theta' \|_2 \label{eq:distance} \\
    l(x_{\mathrm{unobs}}, \phi) := & \max \{ 0, \sup_{(x_{\mathrm{obs}}, \theta') \in H_{1:t}} \alpha_{\theta'}(x_{\mathrm{obs}}) - d(x_{\mathrm{unobs}}, x_{\mathrm{obs}}, \phi, \theta') \} \label{eq:lower_v2} \\
    u(x_{\mathrm{unobs}}, \phi) := & \min \{ 1, \inf_{(x_{\mathrm{obs}}, \theta') \in H_{1:t}} \alpha_{\theta'}(x_{\mathrm{obs}}) + d(x_{\mathrm{unobs}}, x_{\mathrm{obs}}, \phi, \theta') \} \label{eq:upper_v2}
\end{align}

Under these definitions, since $|\alpha_\phi(x_{\mathrm{unobs}}) - \alpha_{\theta'}(x_{\mathrm{obs}})| \le d(x_{\mathrm{unobs}}, x_{\mathrm{obs}}, \phi, \theta')$ for every $(x_{\mathrm{obs}}, \theta') \in H_{1:t}$, it holds that $ l(x_{\mathrm{unobs}}, \phi) \le   \alpha_\phi(x_{\mathrm{unobs}}) \le u(x_{\mathrm{unobs}}, \phi)$ for any unobserved sample and deployed model $\phi$. Notice that this formulation reduces to the definition of limits in Sec.~\ref{sec:improving} when $L_\theta = 0$. Next, to update Theo.~\ref{theo:coverage_gap_invariant}, it is only necessary to update the bound of $|u(\cdot, \cdot) - l(\cdot, \cdot)|$ as follows:
\begin{align}
    |u(x_{\mathrm{unobs}}, \phi) - l(x_{\mathrm{unobs}}, \phi)| \le 2L_x \|x_{\mathrm{unobs}}-x_{\mathrm{obs}}\|_2 +2 L_\theta \|\phi - \theta' \|_2, \quad \forall (x_{\mathrm{obs}}, \theta') \in H_{1:t}
\end{align}

\begin{theorem}
\label{theo:coverage_gap_dependent}
Assume the conditions of Theo.~\ref{theo:model_gap}, that $\alpha_\theta(x)$ is $(L_x, L_\theta)$-sensitive, and define $l,u$ as in Eqs.~\ref{eq:lower_v2}--\ref{eq:upper_v2} and the distance $d$ as in Eq.~\ref{eq:distance}. Then, for any deployed
model $\phi$,
\begin{align}
\|G(\phi)-\overline G(\phi)\|_2
\le
\frac{2}{\gamma}
\sup_{\theta\in\Theta}
\mathbb E_{D(\phi)}
\left[
(1-O)
\inf_{(x_{\mathrm{obs}}, \theta') \in H_{1:t}}
d(X, x_\mathrm{obs}, \phi, \theta') 
\,
\|\nabla_\theta\Delta_\ell(X,\theta)\|_2
\right].
\end{align}
In particular, for binary cross-entropy with linear score
$f_\theta(x)=\theta^\top x$ and $\|x\|_2\le R$,
\begin{align}
\|G(\phi)-\overline G(\phi)\|_2
\le
\frac{2R}{\gamma}
\mathbb E_{D(\phi)}
\left[
(1-O)
\inf_{(x_{\mathrm{obs}}, \theta') \in H_{1:t}}
d(X, x_\mathrm{obs}, \phi, \theta') 
\right].
\end{align}
\end{theorem}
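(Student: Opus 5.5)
The plan mirrors the proof of Theo.~\ref{theo:coverage_gap_invariant}. I first check that the limits in Eqs.~\ref{eq:lower_v2}--\ref{eq:upper_v2} satisfy AS.~\ref{as:limits}, so that Theo.~\ref{theo:model_gap} applies. I then bound the interval width pointwise by $2\inf_{(x_{\mathrm{obs}},\theta')\in H_{1:t}} d(\cdot)$ and substitute this bound into Theo.~\ref{theo:model_gap}.

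\textbf{Validity of the limits.} Fix $\phi$ and an unobserved $x$. By $(L_x,L_\theta)$-sensitivity, for every $(x_{\mathrm{obs}},\theta')\in H_{1:t}$,
\begin{align}
\alpha_{\theta'}(x_{\mathrm{obs}})-d(x,x_{\mathrm{obs}},\phi,\theta')\le \alpha_\phi(x)\le \alpha_{\theta'}(x_{\mathrm{obs}})+d(x,x_{\mathrm{obs}},\phi,\theta').
\end{align}
Taking the supremum of the left side and the infimum of the right side over $H_{1:t}$ keeps both inequalities. Since $\alpha_\phi(x)\in[0,1]$, clipping at $0$ and $1$ also keeps them, so $0\le l(x,\phi)\le\alpha_\phi(x)\le u(x,\phi)\le 1$. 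For samples with $O=1$ the choice of $l,u$ is irrelevant, because $\overline J$ uses the true loss there. Hence AS.~\ref{as:limits} holds wherever it is needed, and Theo.~\ref{theo:model_gap} gives
\begin{align}
\|G(\phi)-\overline G(\phi)\|_2\le\frac1\gamma\sup_{\theta\in\Theta}\mathbb E_{D(\phi)}\left[(1-O)|u(X,\phi)-l(X,\phi)|\,\|\nabla_\theta\Delta_\ell(X,\theta)\|_2\right].
\end{align}

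\textbf{Width bound.} Let $\tilde u,\tilde l$ be the unclipped infimum and supremum expressions. Clipping can only shrink the interval: if $\tilde l\le \tilde u$, then $u-l\le \tilde u-\tilde l$. The ordering $\tilde l\le\tilde u$ holds because both bracket $\alpha_\phi(x)$, so $u-l\ge 0$ and the absolute value can be dropped. Now fix any pair $(x_1,\theta_1)\in H_{1:t}$. Using that same pair in both the infimum and the supremum gives
\begin{align}
\tilde u-\tilde l\le \bigl(\alpha_{\theta_1}(x_1)+d_1\bigr)-\bigl(\alpha_{\theta_1}(x_1)-d_1\bigr)=2d_1,
\end{align}
where $d_1:=d(x,x_1,\phi,\theta_1)$. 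Taking the infimum over $(x_1,\theta_1)$ yields $|u-l|\le 2\inf_{H_{1:t}} d(x,x_{\mathrm{obs}},\phi,\theta')$. Substituting this into the expectation, pointwise and with nonnegative weights $(1-O)\|\nabla_\theta\Delta_\ell\|_2$, proves the general statement.

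\textbf{BCE specialization.} From the proof of Theo.~\ref{theo:model_gap}, $\|\nabla_\theta\Delta_\ell(x,\theta)\|_2=\|x\|_2\le R$. This removes the dependence on $\theta$, so the supremum over $\theta$ disappears and the factor $R$ comes out of the expectation.

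\textbf{Main obstacle.} The delicate point is measure-theoretic: $x\mapsto\inf_{H_{1:t}}d$ and the clipped $l,u$ must be measurable for the expectations to make sense. The infimum of a family of continuous functions in $x$ is upper semicontinuous, hence measurable; I would state this briefly. Otherwise the argument is a direct substitution, exactly as in the $L_\theta=0$ case.
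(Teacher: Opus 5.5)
Your proposal is correct and follows essentially the same route as the paper: check via $(L_x,L_\theta)$-sensitivity that the clipped limits in Eqs.~\ref{eq:lower_v2}--\ref{eq:upper_v2} satisfy AS.~\ref{as:limits}, bound $|u-l|$ by $2d(x,x_{\mathrm{obs}},\phi,\theta')$ for every pair in $H_{1:t}$ (hence by twice the infimum), and substitute into Theo.~\ref{theo:model_gap}, using $\|\nabla_\theta\Delta_\ell(x,\theta)\|_2=\|x\|_2\le R$ for the BCE case. Your explicit treatment of clipping and your measurability remark add a little care that the paper leaves implicit.
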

In this result, we moved $L_x$ inside the distance function $d$. Despite this change, the interpretation of the results is the same: the one-step gap is dependent on the strong convexity of the loss function, the rejection rate, the dependence of the gradient of $\ell$ on $Y$, and the width of the limits $|u(\cdot, \cdot) - l (\cdot, \cdot)|$ that depends on the proximity of the current model $\phi$ and variable $X$ to the support of the historical acceptance $H_{1:t}$.

This approach to define $l$ and $u$ requires estimating the label distribution $\alpha$ within the accepted data. This is valid, as conditioned on acceptance, the label distribution is identifiable. However, in practice, when $L_\theta \neq 0$, we must estimate one $\alpha_\theta$ for each deployed $\theta$, because label probabilities differ for each previously used model $\theta$. Our experiments in Appendix~\ref{app:experiment_outcome} demonstrate the applicability of our approach in this more general setting.

\section{Extension to Multi-class Problems}
\label{app:multi_class}

In this section, we briefly discuss how to extend our approach to multi-class problems and the challenges it introduces. One initial consideration when moving from binary classification to multi-class classification is updating the thresholded definition of the observation variable $O$, as in the multi-class problem, the model does not predict a unidimensional score but $K$ scores for $K$ classes. One approach is to separate the predicted labels into two categories: labels with observation $O=1$ and unobserved labels. In a lending example, we could consider three risk classes (high, medium, low) and observe only the label (accepting) of applicants classified as medium or low risk.

The label distribution is defined for each class  $\alpha_\theta^k(x) = P(Y = k | X = x, \theta)$, as well as the limits $l^k(x, \theta), u^k(x, \theta)$. A worst-case loss can be obtained using the same ideas of the linearity of $\ell$ with respect to $\alpha_\theta^k$: 
\begin{align}
\overline \ell_\phi(x, \theta) := \sum_{k=1}^K \log \left (e^{l^k(x, \phi) \ell(x, k, \theta)} + e^{u^k(x, \phi)\ell(x, k, \theta)} \right)
\end{align}
Taking the expectation over $x$ upper-bounds the risk $\mathbb E[\ell(X, Y, \theta)]$. This worst-case loss follows from the fact that the conditional expectation of the loss over $x$ is a linear function of $\alpha^k_\theta(x)$, which attains its maximum at the interval endpoints. However, more work is necessary to study the effectiveness of this loss as it combines multiple maximum operations. A tighter bound could be obtained by using the constraint that $\sum_k \alpha_\theta^k (x) = 1$. However, simply adding such a constraint could break the strong convexity of the worst-case loss.

\section{Estimating \texorpdfstring{$L_x$}{Lx} and \texorpdfstring{$L_\theta$}{L-theta} from Data}
\label{app:estimating}

The proposed approach for improving limits $l$ and $u$ depends on knowing the constants $L_x$ and $L_\theta$, which control how much the probability of the positive label can change when the sample $x$ or model $\theta$ changes. In this section we discuss how such constants could be obtained. Under a deterministic acceptance rule defined by a threshold, both constants are generally not identifiable from accepted samples alone, as regions of the feature space will have zero probability of observation. Two label distributions may agree on the entire accepted region while disagreeing in the unobserved region. Estimating therefore requires domain knowledge, exploration, additional structure, or a combination of them. 

In low-dimensional applications with interpretable features, experts may provide bounds on how rapidly label probabilities can change. In the lending experiment, $x$ comprises the credit score and sensitive attribute, so $L_x$ represents an upper bound on how quickly repayment probability changes across nearby applicant profiles. Importantly, the expert need not identify the smallest Lipschitz constant: any valid conservative upper bound is sufficient.

Without domain knowledge, we can estimate the constants with a finite set of samples collected by randomly accepting applicants. Given additional structure to $\alpha_\theta(\cdot)$, we can obtain specific expressions for the estimator of the constants. We will give one example under the assumption that the label distribution is a logistic function, that is, $\alpha_\theta(x) := \sigma(w^{\star \top}_x x + w^{\star \top}_\theta \theta + b)$ for an unknown vector $w^{\star} := (w^\star_x, w^\star_\theta)$ where $w_x^\star$ contains the influence of features on the label distribution and $w_\theta^\star$ the influence of model parameters. 

By analyzing the maximum value of the gradient of $\alpha_\theta(x)$ we have that $L_x \le \| w^\star_x\|_2 /4$ and $L_\theta \le \| w^\star_\theta\|_2 /4$. It is therefore sufficient to obtain a conservative upper bound on the coefficient norm rather than identify the exact model. Let $\widehat w$ be the MLE of $w^\star$ and $p$ the failure probability. Under correct logistic specification and MLE regularity conditions, the Delta method \citep[Sec.~5.5.4]{casella2002statistical} gives the following asymptotic one-sided ($1 - p$)-confidence upper bound:
\begin{align}
    \widehat L_x := \dfrac{1}{4}\left (\| \widehat w_x \|_2 + z_{1-p} \sqrt{\dfrac{\widehat w_x^\top \widehat V\widehat w_x }{\| \widehat w_x \|_2^2}} \right) \quad \quad  \widehat V = (\mathbf X^\top \widehat W \mathbf X)^{-1}
\end{align}
where $z_{1-p}$ is the $1-p$ quantile of the normal,  $\mathbf X$ is the design matrix, $\widehat V$ is restricted to the coordinates of $w_x$, and $\widehat W$ is a diagonal matrix with $\widehat W[i, i] := s_i(1 - s_i)$ where $s_i$ is the predicted probability for sample $i$. An analogous expression is valid for $\widehat L_\theta$.

To demonstrate this procedure, we present in Tab.~\ref{tab:estimation_lx} 
the results of estimating $L_x = 0.15$ with the assumption that $\alpha_\theta(x) = \sigma(w_x^{\star\top} x + b)$ using a confidence of 95\% and different numbers of samples. To complement this, we ran the misspecification experiment with the overestimate $\widehat L_x = 0.287$ obtained by accepting 2,000 random samples independently of the threshold decision. The performative risk increased by only $0.001$ compared with using the correctly specified $L_x$, and the distance to the stable solution increased from $0.212 (\pm 0.025)$ to $0.363 (\pm 0.024)$, which is lower than results from compared agents. In the lending environment, the label distribution is not modeled by a logistic function, so the result should be interpreted as a data-derived heuristic rather than a calibrated constant $L_x$. However, the additional experiment tests whether Robust RRM remains effective when instantiated with a practical estimate.

\begin{table}[t]
\centering
\caption{Estimation of constant $L_x$ under the assumption of a logistic label distribution and randomized acceptance, independently of model decision.}
\label{tab:estimation_lx}
\begin{tabular}{lc}
\hline
    Number of samples & $\widehat L_x$ (and std. over 10 random seeds) \\ \hline
        200 & $0.457 (\pm 0.148)$ \\ 
        1000 & $0.331 (\pm 0.129)$ \\ 
        2000 & $0.287 (\pm 0.062)$ \\
        10000 & $0.207 (\pm 0.034)$ \\ \hline
\end{tabular}
\end{table}

\section{Implementation Details}
\label{app:implementation}

\paragraph{Computer Resources}
All experiments were executed on CPUs with 8GB of RAM, with no GPU use. Experiments ran on a cluster with automatic job assignment across AMD Ryzen Threadripper PRO 7975WX and Intel(R) Xeon(R) Silver 4410Y.

\subsection{Algorithms Implementation}

In our experiments, we do not have access to the true values of $J(\phi, \theta)$ (or other objectives), only empirical estimates. We retrain repeatedly using gradient descent on the empirical estimate of the objective. At each retraining round, we run 5,000 steps of gradient descent to find the model that approximately minimizes the objective, stopping early when the change in loss after one gradient update is below $10^{-7}$. We apply $L2$ regularization to the model parameters $\theta$ with a weight of $10^{-2}$ to ensure that the objective is strongly convex.

\paragraph{Robust RRM}
To calculate limits $l$ and $u$ we keep a buffer of observations $(\{x_i, y_i, \theta_i\})$ of maximum size $500,000$. We used this dataset to estimate $\alpha_\theta(x)$ within the observed feature space with a logistic regression model. Solving the maximization/minimization in the definitions of $l$ and $u$ involves evaluating distances $\|x_{\mathrm{obs}} - x_{\mathrm{unobs}}\|_2$ among observed samples $x_{\mathrm{obs}}$ and unobserved samples $x_{\mathrm{unobs}}$. To avoid computing every pair of $\|x_{\mathrm{obs}} - x_{\mathrm{unobs}}\|_2$, we used a KD-Tree and searched for the $50$ closest neighbors for each $x_{\mathrm{unobs}}$. Then, we evaluate the complete definitions of $l$ and $u$ among these $50$ samples. During our preliminary evaluation, increasing the neighborhood did not significantly change the results.

\paragraph{Oracle RRM, Accept-only and Affine RM}
We implemented all three variations of repeated risk minimization by changing only the data distribution the agent could access. Oracle RRM had access to the true distribution $D(\theta)$ and full label observations. Accept-only leverages the selected distribution $\tilde D(\theta)$ using only samples accepted by the previous model. Affine RM \citep{khorsandi2025tight} uses a larger historical dataset to minimize risk. We implemented this using the three previous iterations of selectively labeled data, that is, $\tilde D^3(\theta_t) := \frac{1}{3} \sum_{i = 0}^{2} \tilde D(\theta_{t-i})$. 

\paragraph{DR Estimator and DCEM}
These two agents leverage both the accept-only distribution $\tilde D(\theta_t)$ and the full observation of features $X \sim D(\theta_t)$ (without access to non-observed labels) to construct an estimate $\hat y$ of unseen labels $y$. Both agents are based on the causal structure among features, observations, and labels. The first step is to estimate the propensity score $\pi(x) := P(O = 1 \mid X =x)$, which reflects the probability of observation. A basic assumption of the DR Estimator is that $\pi(x) \ge \delta > 0$ for a small $\delta \in \mathbb R$, that is, every feature $x$ has a minimal probability of observation, which does not hold with thresholded decisions. To adapt to our setting, we leverage $\sigma(f_{\theta_t}(x))$ as the propensity score.

The DR Estimator agent estimates the label distribution within accepted samples using a model $\mu : \mathbb R^d \to [0, 1]$ to approximate $P(Y = 1 \mid X = x, O = 1)$. Then, the pseudo label is defined as:
\begin{align}
    y^{\text{DR}} = \mu(x) + \frac{o}{\sigma(f_{\theta_t} (x))} (y - \mu(x))
\end{align}
which then replaces the true label $y$ and is optimized using the logistic regression loss.

The DCEM \citep{chang2024biased} algorithm is based on expectation-maximization. It was designed to handle selective labels when the selection mechanism depends on a sensitive attribute, while the true label does not. The algorithm iteratively runs an expectation step (E-step) where the model at iteration $k$, $\theta(k)$ is used to define the pseudo-label:
\begin{align}
    y^{\text{DCEM}} = y \cdot o + (1 - o) \cdot \sigma(f_{\theta(k)}(x))
\end{align}
Then, the maximization step (M-step) updates the parameter to be the minimizer of:
\begin{align}
    \theta(k+1) := \argmin_\theta \E_{D}[\mathcal L(y^{\text{DCEM}}, \sigma(f_\theta(x)) ) + y^{\text{DCEM}} \mathcal L( y \cdot o , \sigma(f_\theta(x)) \cdot \sigma(f_{\theta(k)}(x))) ]
\end{align}

where $\mathcal{L}(u, v) = -u \log v - (1 - u)\log(1 - v)$ is the binary cross-entropy loss. The first term trains the predictor on the imputed pseudo-labels, while the second term acts as a causal regularizer that enforces consistency with the partial observation $Y \cdot O$. We run the M-step using gradient descent and stop after 20 iterations of the E-M cycle or when the change in one update is less than $ 10^ {- 7} $.

\begin{figure}[t]
    \centering
    \includegraphics[width=0.8\linewidth]{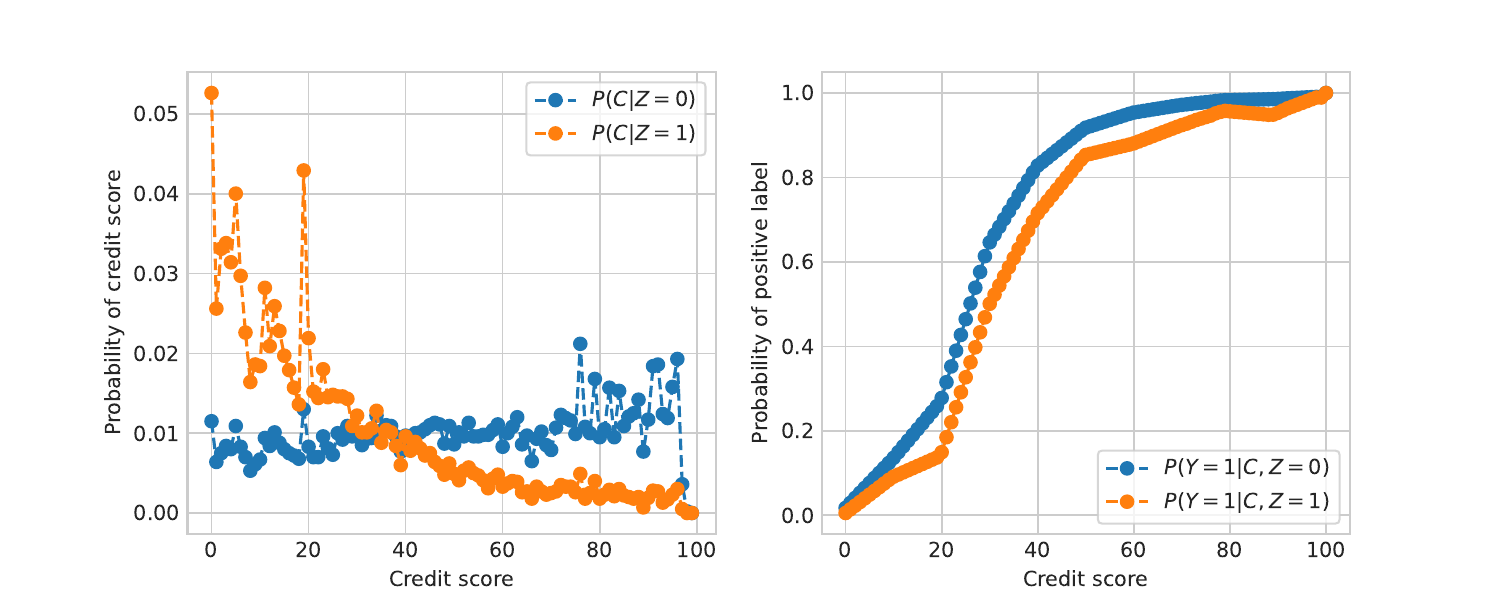}
    \caption{Distribution of scores and positive label calculated from the FICO dataset. $z=1$ represents individuals who self-identify as Black and $z=0$ individuals who self-identify as white.}
    \label{fig:fico}
\end{figure}

\subsection{Environment Implementation}
\label{app:fico}

We implement the lending environment with a population of $n = 100{,}000$ individuals sampled from an initial distribution $D_0$ based on the FICO dataset and updated according to each deployed model $\theta$. In this environment, the feature vector was composed of the credit score, the sensitive attribute, and a constant intercept feature, $x = (c, z, 1)$. We first compute the dataset's demographic distribution, split 12\%/88\% between Black/white individuals, and set $z=1$ for Black individuals. Then, after sampling $z$, we sample a credit score $c \in \{0, \dots, 100\}$ following the empirical conditional distribution of $C \mid Z$. We calculate the empirical positive rate for each pair $(c,z)$ to estimate $\hat \alpha(x) := \widehat P(Y =1 \mid c, z)$. To handle continuous credit scores, we fit a linear interpolation to define the final label distribution. Fig.~\ref{fig:fico} depicts both $P(c|z)$ and $\alpha(x) =P(y|c,z)$.

\section{Additional Results}
\label{app:extra_results}

We report results under the same configuration as in Sec.~\ref{sec:experiments}, but with different levels of performativity, using both $\epsilon = 90$ (Fig.~\ref{fig:credit_90}) and $\epsilon = 150$ (Fig.~\ref{fig:credit_150}). In both configurations, Robust RRM achieved a performative risk similar to RRM and a small distance $\|\theta_t - \theta^{PS}\|_2$ to the stable solution. In both scenarios, Affine RM achieved the same distance to the stable solution as Robust RRM; however, it had the second-highest performative risk with $\epsilon = 90$ and the highest with $\epsilon = 150$. This result suggests that proximity to the stable solution does not guarantee similar performative risk; however, Robust RRM still matched the Oracle's performance. In both scenarios, DCEM also matched the Oracle's performance, despite being farther from the stable solution. Future studies should examine the relationship between distance to the stable solution and distance in performative risk, extending the initial results from Appendix~\ref{app:bound_performative}.

\begin{figure}[t]
    \centering
    \includegraphics[width=0.9\linewidth]{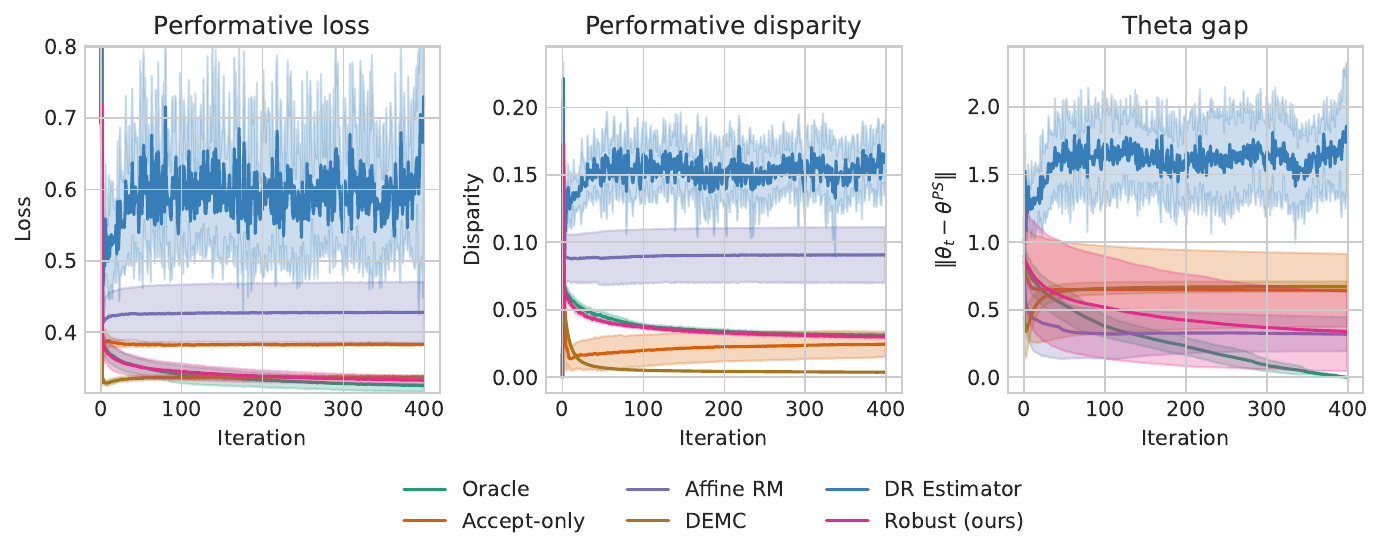}
    \caption{Loss, disparity, and distance to the stable solution presented by RRM with different objective configurations using $\epsilon = 90$. Results are averages (and std.) over 10 random seeds.}
    \label{fig:credit_90}
\end{figure}

\begin{figure}[t]
    \centering
    \includegraphics[width=0.9\linewidth]{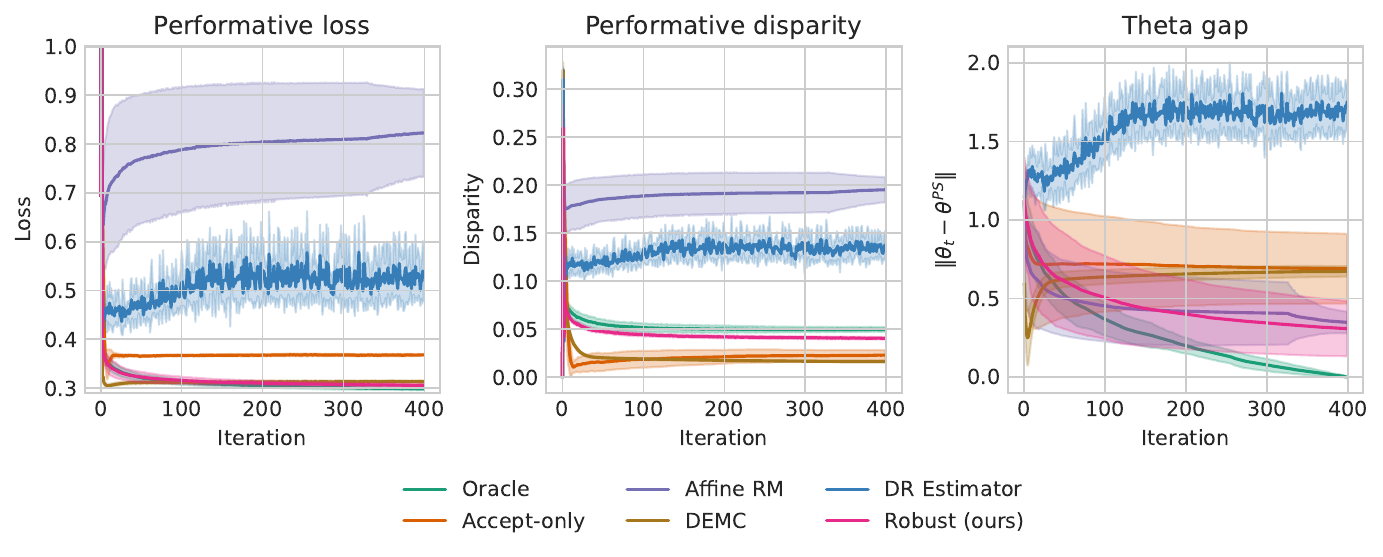}
    \caption{Loss, disparity, and distance to the stable solution for RRM with different objective configurations using $\epsilon = 150$. Results are averages (and std.) over 10 random seeds.}
    \label{fig:credit_150}
\end{figure}

\begin{figure}
    \centering
    \includegraphics[width=0.9\linewidth]{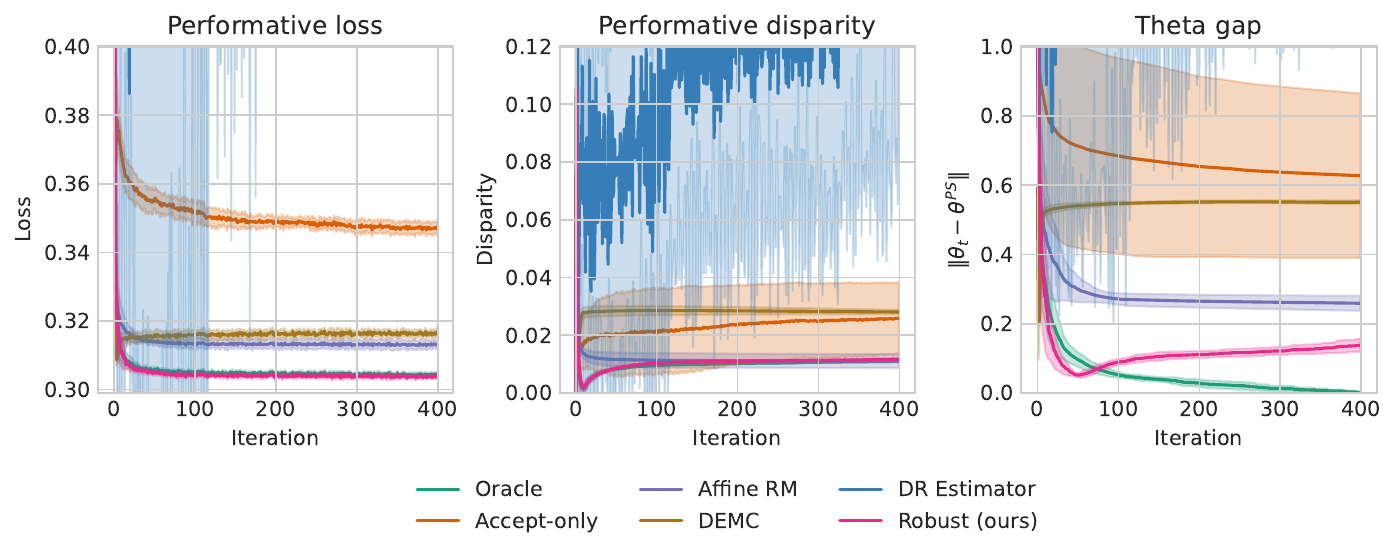}
    \caption{Loss, disparity, and distance to the stable solution in the environment with outcome performativity with $\epsilon = 50$. The y-axis was cropped to improve readability and does not display the full results for the DR Estimator agent. Results are averages (and std.) over 10 random seeds.}
    \label{fig:creditoutcome}
\end{figure}

\section{Experiments with Outcome Performativity}
\label{app:experiment_outcome}

In this section, we empirically evaluate Robust RRM when $L_\theta \neq 0$ using the limit-definition methodology in Appendix~\ref{app:results_dependent}. We adapt the lending environment based on ideas of outcome performativity \citep{dunner2022anticipating, kim2023making} where the model $\theta$ (or the final predicted score $\hat y$) has a direct effect on $y$. We use the following linear effect for $\rho \in \mathbb R^d$:
\begin{align}
    \alpha_\theta(x) = \alpha(x) + \theta^\top \rho
\end{align}
where $\alpha(x)$ is the probability of the positive label before the model effect and $\rho$ sets the strength of the effect. $\alpha_\theta(x)$ is clipped to $[0, 1]$ to represent a valid probability. If $\alpha(x)$ is $(L_x, 0)$-sensitive, then, by the Cauchy--Schwarz inequality, $\alpha_\theta(x)$ is $(L_x, \|\rho\|_2)$-sensitive. In the experiment, we set $\rho=(5,0,0)$, following the coordinates of $x = (c, z, 1)$, so only the coefficient associated with credit score affects the label distribution. This illustrates a setting in which placing greater positive weight on credit score induces greater effort to repay the loan and improve future score. We also fix $\epsilon = 50$.

We compare Robust RRM with all agents presented in Sec.~\ref{sec:experiments}. To compute label limits $u, l$, we use a modified version that only accounts for the coordinate of $\theta$ that induces performativity; that is, we replace $L_\theta \|\theta-\theta'\|_2$ by $|\rho_1| \, |[\theta]_1-[\theta']_1|$ in Eqs.~\ref{eq:lower_v2}--\ref{eq:upper_v2} ($[\theta]_1$ is the first coordinate of $\theta$).

Fig.~\ref{fig:creditoutcome} displays results for all agents. Robust RRM achieved a similar performative risk and disparity to RRM with Oracle access to labels, and it also had the smallest distance to the stable solution. Affine RM and DCEM achieved similar performative risk, but Affine RM had a smaller distance to the stable solution. Similarly to experiments without outcome performativity, using only accepted data or the DR Estimator performed the worst, with the DR Estimator agent diverging during repeated retraining. This result indicates that Robust RRM is also effective when the model affects both the feature and label distributions.



\end{document}